\newtheorem{assumption}{Assumption}
\newtheorem{theorem}{Theorem}
\newtheorem{lemma}{Lemma}[section]
\newtheorem{corollary}{Corollary}[section]
\newtheorem{remark}[theorem]{Remark}
\newtheorem{proposition}{Proposition}[section]
\newcommand{\neurips}[1] {{\color{black} #1}}
\newcommand{\iclr}[1] {{\color{black} #1}}
\newcommand{\data}{\mathcal{D}}
\newcommand{\policy}{\pi}
\newcommand{\mdp}{\mathcal{M}}
\newcommand{\states}{\mathcal{S}}
\newcommand{\actions}{\mathcal{A}}
\newcommand{\transitions}{T}
\newcommand{\behavior}{{\pi_\beta}}
\newcommand{\bellman}{\mathcal{B}}
\newcommand{\bellmanhat}{\widehat{\mathcal{B}}}
\newcommand{\mhat}{\hat{\mathcal{M}}}
\newcommand{\mdphat}{\widehat{\mathcal{M}}}
\newcommand{\mdpbar}{\overline{\mathcal{M}}}
\newcommand{\bo}{\mathbf{o}}
\newcommand{\bs}{\mathbf{s}}
\newcommand{\ba}{\mathbf{a}}
\def\eqref#1{equation~\ref{#1}}
\def\1{\bm{1}}
\DeclareMathAlphabet{\mathsfit}{\encodingdefault}{\sfdefault}{m}{sl}
\SetMathAlphabet{\mathsfit}{bold}{\encodingdefault}{\sfdefault}{bx}{n}
\newcommand{\E}{\mathbb{E}}
\titlespacing\section{0pt}{0pt plus 2pt minus 2pt}{0pt plus 2pt minus 2pt}
\titlespacing\subsection{0pt}{3pt plus 4pt minus 2pt}{0pt plus 2pt minus 2pt}
\titlespacing\subsubsection{0pt}{3pt plus 4pt minus 2pt}{0pt plus 2pt minus 2pt}
\title{COMBO: Conservative Offline Model-Based\\Policy Optimization}
\author{Tianhe Yu$^{*, 1}$, Aviral Kumar$^{*, 2}$, Rafael Rafailov$^{1}$, Aravind Rajeswaran$^{3}$, \vspace{0.05cm}\\ \textbf{Sergey Levine$^{2}$, Chelsea Finn$^{1}$} \vspace{0.1cm}\\
$^1$Stanford University, $^2$UC Berkeley, $^3$Facebook AI Research~~~~~~~~ ($^*$Equal Contribution) \vspace{0.1cm}\\ 
\texttt{tianheyu@cs.stanford.edu, aviralk@berkeley.edu}
}
\begin{document}

\maketitle

\begin{abstract}
  \neurips{Model-based reinforcement learning (RL) algorithms, which learn a dynamics model from logged experience and perform conservative planning under the learned model, have emerged as a promising paradigm for offline reinforcement learning (offline RL). However, practical variants of such model-based algorithms rely on explicit uncertainty quantification for incorporating conservatism. Uncertainty estimation with complex models, such as deep neural networks, can be difficult and unreliable. We empirically find that uncertainty estimation is not accurate and leads to poor performance in certain scenarios in offline model-based RL. We overcome this limitation by developing a new model-based offline RL algorithm, COMBO, that trains a value function using both the offline dataset and data generated using rollouts under the model while also additionally regularizing the value function on out-of-support state-action tuples generated via model rollouts. This results in a conservative estimate of the value function for out-of-support state-action tuples, without requiring explicit uncertainty estimation.
Theoretically, we show that COMBO satisfies a policy improvement guarantee in the offline setting.
Through extensive experiments, we find that COMBO attains greater performance compared to prior offline RL on problems that demand generalization to related but previously unseen tasks, and also consistently matches or outperforms prior offline RL methods on widely studied offline RL benchmarks, including image-based tasks.}
\end{abstract}

\section{Introduction}

Offline reinforcement learning (offline RL)~\citep{LangeGR12, levine2020offline} refers to the setting where policies are trained using static, previously collected datasets. This presents an attractive paradigm for data reuse and safe policy learning in many applications, such as healthcare~\citep{Wang2018SupervisedRL}, autonomous driving~\citep{Yu2020BDD100KAD}, robotics~\citep{kalashnikov2018scalable, Rafailov2020LOMPO}, and personalized recommendation systems~\citep{SwaminathanJ15}. Recent studies have observed that RL algorithms originally developed for the online or interactive paradigm perform poorly in the offline case~\citep{fujimoto2018off, kumar2019stabilizing, kidambi2020morel}. This is primarily attributed to the distribution shift that arises over the course of learning between the offline dataset and the learned policy. Thus, development of algorithms specialized for offline RL is of paramount importance to benefit from the offline data available in aformentioned applications. In this work, we develop a principled model-based offline RL algorithm that matches or exceeds the performance of prior offline RL algorithms in benchmark tasks.

A major paradigm for algorithm design in offline RL is to incorporate conservatism or regularization into online RL algorithms.
Model-free offline RL algorithms~\citep{fujimoto2018addressing, kumar2019stabilizing, wu2019behavior,jaques2019way,kumar2020conservative,kostrikov2021offline}
directly incorporate conservatism into the policy or value function training
and do not require learning a dynamics model. 
However, model-free algorithms learn only on the states in the offline dataset, which can lead to overly conservative algorithms.
In contrast, model-based algorithms~\citep{kidambi2020morel, yu2020mopo} learn a pessimistic dynamics model, which in turn induces a conservative estimate of the value function. By generating and training on additional synthetic data, model-based algorithms have the potential for broader generalization and solving new tasks using the offline dataset~\citep{yu2020mopo}.
\neurips{However, these methods rely on some sort of strong assumption about uncertainty estimation, typically assuming access to a \emph{model error oracle} that can estimate upper bounds on model error for any state-action tuple. In practice, such methods use more heuristic uncertainty estimation methods, which can be difficult or unreliable for complex datasets or deep network models. It then remains an open question as to whether we can formulate principled model-based offline RL algorithms with concrete theoretical guarantees on performance \emph{without} assuming access to an uncertainty or model error oracle. In this work, we propose precisely such a method, by eschewing direct uncertainty estimation, which we argue is not necessary for offline RL.}

\begin{wrapfigure}{r}{0.5\textwidth}
    \centering
    \vspace{-0.2cm}
    \includegraphics[width=0.45\textwidth]{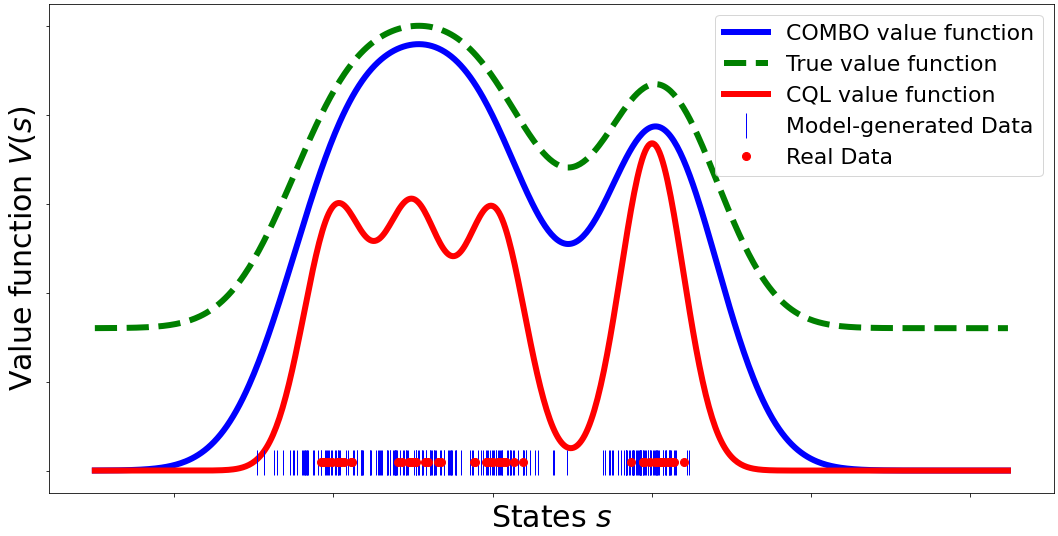}
    \vspace*{-0.4cm}
    \caption{\small COMBO learns a conservative value function by utilizing both the offline dataset as well as simulated data from the model. \neurips{Crucially, COMBO does not require uncertainty quantification, and the value function learned by COMBO is less conservative on the transitions seen in the dataset than CQL.} This enables COMBO to steer the agent towards higher value states compared to CQL, which may steer towards more optimal states,
    as illustrated in the figure.}
    \vspace*{-0.4cm}
    \label{fig:teaser}
\end{wrapfigure}

Our main contribution is the development of conservative offline model-based policy optimization (COMBO), a new model-based algorithm for offline RL. COMBO
learns a dynamics model using the offline dataset. Subsequently, it employs an actor-critic
method where the value function is learned using both the offline dataset as well as synthetically generated data from the model, similar to Dyna~\citep{sutton1991dyna} and a number of recent methods~\citep{janner2019trust,yu2020mopo,clavera2020model,Rafailov2020LOMPO}.
However, in contrast to Dyna, COMBO learns a conservative critic function
by penalizing the value function in state-action tuples that are not in the support of the offline dataset, obtained by simulating the learned model. We theoretically show that for any policy, the Q-function learned by COMBO is a lower-bound on the true Q-function.
While the approach of optimizing a performance lower-bound is similar in spirit to prior model-based algorithms~\citep{kidambi2020morel, yu2020mopo},
\neurips{COMBO crucially does not assume access to a model error or uncertainty oracle.}
\neurips{In addition, we show theoretically that the Q-function learned by COMBO is less conservative than model-free counterparts such as CQL~\citep{kumar2020conservative}, and quantify conditions under which the this lower bound is tighter than the one derived in CQL. This is illustrated through an example in Figure~\ref{fig:teaser}.  Following prior works~\citep{laroche2019safe}, we show that COMBO enjoys a safe policy improvement guarantee. By interpolating model-free and model-based components, this guarantee can utilize the best of both guarantees in certain cases.}
Finally, in our experiments, \neurips{we find that COMBO achieves the best performance on tasks that require out-of-distribution generalization and outperforms previous latent-space offline model-based RL methods on image-based robotic manipulation benchmarks. We also test COMBO on commonly studied benchmarks for offline RL and find that COMBO generally performs well on the benchmarks, achieving the highest score in $9$ out of $12$ MuJoCo domains from the D4RL~\citep{fu2020d4rl} benchmark suite.}%

\section{Preliminaries}
\label{sec:prelim}

\textbf{Markov Decision Processes and Offline RL.} We study RL in the framework of Markov decision processes (MDPs) specified by the tuple $\mathcal{M} = (\mathcal{S}, \mathcal{A}, \transitions, r, \mu_0, \gamma)$.
$\mathcal{S}, \mathcal{A}$ denote the state and action spaces. 
$\transitions(\bs' | \bs, \ba)$ and $r(\bs,\ba) \in [-R_{\max}, R_{\max}]$ represent the dynamics and reward function respectively. 
$\mu_0(s)$ denotes the initial state distribution, and $\gamma \in (0,1)$ denotes the discount factor. 
We denote the discounted state visitation distribution of a policy $\pi$ using \hbox{$d_\mathcal{M}^\pi(\bs) := (1 - \gamma)\sum_{t=0}^\infty\gamma^t\mathcal{P}(s_t = \bs | \pi)$}, where $\mathcal{P}(s_t = \bs | \pi)$ is the probability of reaching state $\bs$ at time $t$ by rolling out $\pi$ in $\mathcal{M}$. Similarly, we denote the state-action visitation distribution with $d_\mathcal{M}^\pi(\bs, \ba) \coloneqq d_\mathcal{M}^\pi(\bs)\pi(\ba|\bs)$. The goal of RL is to learn a policy that maximizes the return, or long term cumulative rewards:
$
    \max_\policy J(\mathcal{M}, \policy) := \frac{1}{1-\gamma}\E_{(\bs, \ba) \sim d_\mathcal{M}^\pi(\bs, \ba)}[r(\bs, \ba)].
$

Offline RL is the setting where we have access only to a fixed dataset $\mathcal{D} = \{(\bs, \ba, r, \bs')\}$, which consists of transition tuples from trajectories collected using a behavior policy $\behavior$. In other words, the dataset $\mathcal{D}$ is sampled from $d^\behavior(\bs, \ba) \coloneqq d^\behavior(\bs) \behavior(\ba|\bs)$. We define $\mdpbar$ as the empirical MDP induced by the dataset $\mathcal{D}$ and $d(\bs,\ba)$ as sampled-based version of $d^\behavior(\bs, \ba)$. In the offline setting, 
\iclr{the goal is to find the best possible policy using the fixed offline dataset.}

\textbf{Model-Free Offline RL Algorithms.} One class of approaches for solving MDPs involves the use of dynamic programming and actor-critic schemes~\citep{SuttonBook, BertsekasBook}, which do not explicitly require the learning of a dynamics model. To capture the long term behavior of a policy without a model, we define the action value function as
$
    Q^\pi(\bs, \ba) \coloneqq \mathbb{E} \left[ \sum_{t=0}^\infty \gamma^t \ r(\bs_t, \ba_t) \mid \bs_0 = \bs, \ba_0 = \ba \right], %
$
where future actions are sampled from $\policy(\cdot|\bs)$ and state transitions happen according to the MDP dynamics. Consider the following Bellman operator:
$
\mathcal{B}^\policy Q (\bs, \ba) := r(\bs, \ba) + \gamma \mathbb{E}_{\ \bs' \sim \transitions(\cdot | \bs, \ba), \ba' \sim \policy(\cdot | \bs')} \left[ Q(\bs', \ba') \right]$,
and its sample based counterpart:
$
\widehat{\mathcal{B}}^\policy Q (\bs, \ba) := r(\bs, \ba) + \gamma Q(\bs', \ba'),
$
associated with a single transition $(\bs, \ba, \bs')$ and $\ba' \sim \policy (\cdot | \bs')$. The action-value function satisfies the Bellman consistency criterion given by $\mathcal{B}^\policy Q^\policy(\bs, \ba) = Q^\policy (\bs, \ba) \ \forall (\bs, \ba)$. When given an offline dataset $\mathcal{D}$, standard approximate dynamic programming~(ADP) and actor-critic methods use this criterion to alternate between policy evaluation~\citep{Munos2008FiniteTimeBF} and policy improvement. A number of prior works have observed that such a direct extension of ADP and actor-critic schemes to offline RL leads to poor results due to distribution shift over the course of learning and over-estimation bias in the $Q$ function~\citep{fujimoto2018off, kumar2019stabilizing, wu2019behavior}. To address these drawbacks, prior works have proposed a number of modifications aimed towards regularizing the policy or value function (see Section~\ref{sec:related}). In this work, we primarily focus on CQL~\citep{kumar2020conservative}, which alternates between:

{\bf Policy Evaluation:} The $Q$ function associated with the current policy $\policy$ is approximated conservatively by repeating the following optimization:

\vspace*{-15pt}
\begin{small}
\begin{align}
    \!\!\!\!{Q}^{k+1}\!\!\leftarrow&\! \arg\min_{Q} \beta\!\left(\E_{\bs \sim \mathcal{D}, \ba \sim \mu(\cdot|\bs)}\!\left[Q(\bs,\ba)\right]\!-\!\E_{\bs, \ba \sim \data}\!\left[Q(\bs,\ba)\right]\right)\!+\!\frac{1}{2}\E_{\bs, \ba, \bs' \sim \mathcal{D}}\!\!\left[\!\left(Q(\bs, \ba)\! -\! \widehat{\bellman}^\policy{Q}^k(\bs, \ba)\right)\!^2\! \right]\!,
    \label{eq:cql_loss}
\end{align}
\end{small}
\vspace*{-15pt}

where $\mu(\cdot|s)$ is a wide sampling distribution such as the uniform distribution over action bounds. CQL effectively penalizes the $Q$ function at states in the dataset for actions not observed in the dataset. This enables a conservative estimation of the value function for any policy~\citep{kumar2020conservative}, mitigating the challenges of over-estimation bias and distribution shift.

{\bf Policy Improvement:} After approximating the Q function as $\hat{Q}^\policy$, the policy is improved as
$
\policy \leftarrow \arg \max_{\policy'}\!\! \ \mathbb{E}_{\bs \sim \mathcal{D}, \ba \sim \policy'(\cdot | \bs)}\!\left[ \hat{Q}^\policy (\bs, \ba) \right].
$
Actor-critic methods with parameterized policies and $Q$ functions approximate $\arg \max$ and $\arg \min$ in above equations with a few gradient descent steps.

\textbf{Model-Based Offline RL Algorithms.} A second class of algorithms for solving MDPs involve the learning of the dynamics function, and using the learned model to aid policy search. Using the given dataset $\mathcal{D}$, a dynamics model $\widehat{T}$ is typically trained using maximum likelihood estimation as:
$\min_{\widehat{T}} \ \mathbb{E}_{(\bs, \ba, \bs') \sim \mathcal{D}} \left[ \log \widehat{T}(\bs' | \bs, \ba) \right]$.
A reward model $\hat{r}(\bs, \ba)$ can also be learned similarly if it is unknown. Once a model has been learned, we can construct the learned MDP $\widehat{\mathcal{M}} = (\mathcal{S}, \mathcal{A}, \widehat{T}, \hat{r}, \mu_0, \gamma)$, which has the same state and action spaces, but uses the learned dynamics and reward function. 
Subsequently, any policy learning or planning algorithm can be used to recover the optimal policy in the model as $\hat{\policy} = \arg \max_{\policy} J(\widehat{\mathcal{M}}, \policy).$

This straightforward approach is known to fail in the offline RL setting, both in theory and practice, due to distribution shift and model-bias~\citep{RossB12, kidambi2020morel}. In order to overcome these challenges, offline model-based algorithms like MOReL~\citep{kidambi2020morel} and MOPO~\citep{yu2020mopo} use uncertainty quantification to construct a lower bound for policy performance and optimize this lower bound by assuming a model error oracle $u(\bs,\ba)$. By using an uncertainty estimation algorithm like bootstrap ensembles~\citep{OsbandAC18, Azizzadenesheli18, POLO}, we can estimate $u(\bs, \ba)$.
By constructing and optimizing such a lower bound, offline model-based RL algorithms avoid the aforementioned pitfalls like model-bias and distribution shift.
While any RL or planning algorithm can be used to learn the optimal policy for $\widehat{\mathcal{M}}$, we focus specifically on MBPO~\citep{janner2019trust, sutton1991dyna} which was used in MOPO. MBPO follows the standard structure of actor-critic algorithms, but in each iteration uses an augmented dataset $\data \cup \data_{\text{model}}$ for policy evaluation. Here, $\data$ is the offline dataset and $\data_{\text{model}}$ is a dataset obtained by simulating the current policy using the learned dynamics model. 
Specifically, at each iteration, MBPO performs $k$-step rollouts using $\widehat{T}$ starting from state $\bs \in \mathcal{D}$ with a particular rollout policy $\mu(\ba|\bs)$, adds the model-generated data to $\data_{\text{model}}$, and optimizes the policy with a batch of data sampled from $\mathcal{D} \cup \data_{\text{model}}$ where each datapoint in the batch is drawn from $\mathcal{D}$ with probability $f \in [0, 1]$ and $\data_{\text{model}}$ with probability $1 - f$.

\section{Conservative Offline Model-Based Policy Optimization}
\label{sec:combo}

\neurips{The principal limitation of prior offline model-based algorithms (discussed in Section~\ref{sec:prelim}) is the assumption of having access to a model error oracle for uncertainty estimation and strong reliance on heuristics of quantifying the uncertainty. In practice, such heuristics could be challenging for complex datasets or deep neural network models~\citep{ovadia2019can}. We argue that uncertainty estimation is not imperative for offline model-based RL and empirically show that uncertainty estimation could be inaccurate in offline RL problems especially when generalization to unknown behaviors is required in Section~\ref{sec:uq}. Our goal is to develop a model-based offline RL algorithm that enables optimizing a lower bound on the policy performance, but without requiring uncertainty quantification. We achieve this by extending conservative Q-learning~\citep{kumar2020conservative}, which does not require explicit uncertainty quantification, into the model-based setting. 
Our algorithm COMBO, summarized in Algorithm~\ref{alg:combo}, alternates between a conservative policy evaluation step and a policy improvement step, which we outline below.}

{\bf Conservative Policy Evaluation:} Given a policy $\policy$, an offline dataset $\data$, and a learned model of the MDP $\mhat$, the goal in this step is to obtain a conservative estimate of $Q^\policy$. To achieve this, we penalize the Q-values evaluated on data drawn from a particular state-action distribution that is more likely to be out-of-support while pushing up the Q-values on state-action pairs that are trustworthy, which is implemented by repeating the following recursion:

\vspace*{-10pt}
\begin{small}
\begin{align}
    \!\!\hat{Q}^{k+1}\! \leftarrow&\! \arg\min_{Q} \beta\left(\E_{\bs, \ba \sim \rho(\bs,\ba)}\!\left[Q(\bs,\ba)\right]\!-\!\E_{\bs, \ba \sim \data}\!\left[Q(\bs,\ba)\right]\right)\!+\!\frac{1}{2}\E_{\bs, \ba, \bs' \sim d_f}\!\!\left[\!\left(Q(\bs, \ba)\! -\! \widehat{\bellman}^\policy\hat{Q}^k(\bs, \ba)\right)^2\! \right]\!.
    \label{eq:implicit_update}
\end{align}
\end{small}
\vspace*{-10pt}

Here, $\rho(\bs, \ba)$ and $d_f$ are sampling distributions that we can choose. Model-based algorithms allow ample flexibility for these choices while providing the ability to control the bias introduced by these choices. For $\rho(\bs, \ba)$, we make the following choice:
$
\rho(\bs, \ba) =  d^\policy_{\mdphat} (\bs) \pi(\ba | \bs),
$
where $d^\policy_{\mdphat} (\bs)$ is the discounted marginal state distribution when executing $\policy$ in the learned model $\mdphat$. Samples from $d^\policy_{\mdphat} (\bs)$ can be obtained by rolling out $\policy$ in $\mdphat$.
Similarly, $d_f$ is an $f-$interpolation between the offline dataset and synthetic rollouts from the model:
$
d_f^\mu (\bs, \ba) := f \ d(\bs, \ba) + (1-f) \ d^\mu_{\mdphat} (\bs, \ba),
$
where $f \in [0,1]$ is the ratio of the datapoints drawn from the offline dataset as defined in Section~\ref{sec:prelim} and $\mu(\cdot | \bs)$ is the rollout distribution used with the model, which can be modeled as $\policy$ or a uniform distribution. To avoid notation clutter, we also denote $d_f := d_f^\mu$.

Under such choices of $\rho$ and $d_f$, we push down (or conservatively estimate) Q-values on state-action tuples from model rollouts and push up Q-values on the real state-action pairs from the offline dataset. When updating Q-values with the Bellman backup, we use a mixture of both the model-generated data and the real data, similar to Dyna~\citep{sutton1991dyna}. 
Note that in comparison to CQL and other model-free algorithms, COMBO learns the Q-function over a richer set of states beyond the states in the offline dataset. 
This is made possible by performing rollouts under the learned dynamics model, denoted by $d^\mu_{\mdphat} (\bs, \ba)$.
We will show in Section~\ref{sec:theory} that the Q function learned by repeating the recursion in Eq.~\ref{eq:implicit_update} provides a lower bound on the true Q function, without the need for explicit uncertainty estimation. Furthermore, we will theoretically study the advantages of using synthetic data from the learned model, and characterize the impacts of model bias.

\begin{algorithm}[t!]
\begin{small}
  \caption{COMBO: Conservative Model Based Offline Policy Optimization}\label{alg:combo}
  \begin{algorithmic}[1]
    \REQUIRE Offline dataset $\data$, rollout distribution $\mu(\cdot|\bs)$, learned dynamics model $\widehat{T}_\theta$, initialized policy and critic  $\policy_\phi$ and $Q_\psi$.
    \STATE Train the probabilistic dynamics model $\widehat{T}_\theta(\bs', r|\bs,\ba) = \mathcal{N}(\mu_\theta(\bs, \ba), \Sigma_\theta(\bs, \ba))$ on $\data$.
    \STATE Initialize the replay buffer $\data_{\text{model}} \leftarrow \varnothing$.
    \FOR{$i=1, 2, 3, \cdots,$}
    \STATE Collect model rollouts by sampling from $\mu$ and $\widehat{T}_\theta$ starting from states in $\data$. Add model rollouts to $\data_\text{model}$.
    \STATE Conservatively evaluate $\policy_\phi^{i}$ by repeatedly solving eq.~\ref{eq:implicit_update} to obtain $\hat{Q}^{\policy_\phi^i}_\psi$ using samples from $\data \cup \data_\text{model}$.
    \STATE Improve policy under state marginal of $d_f$ by solving eq.~\ref{eq:combo_policy_improvement} to obtain $\policy_\phi^{i+1}$.
    \ENDFOR
  \end{algorithmic}
\end{small}
\end{algorithm}

{\bf Policy Improvement Using a Conservative Critic:} After learning a conservative critic $\hat{Q}^\policy$, we improve the policy as:
\begin{equation}
\label{eq:combo_policy_improvement}
\policy' \leftarrow \arg \max_{\policy} \ \mathbb{E}_{\bs \sim \rho, \ba \sim \policy(\cdot|\bs)} \left[ \hat{Q}^{\policy} (\bs, \ba) \right]
\end{equation}
where $\rho(\bs)$ is the state marginal of $\rho(\bs,\ba)$. When policies are parameterized with neural networks, we approximate the $\arg \max$ with a few steps of gradient descent. In addition, entropy regularization can also be used to prevent the policy from becoming degenerate if required~\citep{haarnoja2018soft}. In Section~\ref{sec:policy_improvement_theory}, we show that the resulting policy is guaranteed to improve over the behavior policy.

\textbf{Practical Implementation Details.} Our practical implementation largely follows MOPO, with the key exception that we perform conservative policy evaluation as outlined in this section, rather than using uncertainty-based reward penalties. Following MOPO, we represent the probabilistic dynamics model using a neural network, with parameters $\theta$, that produces a Gaussian distribution over the next state and reward: $\widehat{T}_\theta(\bs_{t+1}, r| \bs, \ba) = \mathcal{N}(\mu_\theta(\bs_t, \ba_t), \Sigma_\theta(\bs_t, \ba_t))$. The model is trained via maximum likelihood.
For conservative policy evaluation (eq.~\ref{eq:implicit_update}) and policy improvement (eq.~\ref{eq:combo_policy_improvement}), we augment $\rho$ with states sampled from the offline dataset, which shows more stable improvement in practice.
\iclr{It is relatively common in prior work on model-based offline RL to select various hyperparameters using online policy rollouts~\citep{yu2020mopo,kidambi2020morel,argenson2020model,lee2021representation}. However, we would like to avoid this with our method, since requiring online rollouts to tune hyperparameters contradicts the main aim of offline RL, which is to learn entirely from offline data. Therefore, \emph{we do not use online rollouts for tuning COMBO}, and instead devise an automated rule for tuning important hyperparameters such as $\beta$ and $f$ in a fully offline manner. We search over a small discrete set of hyperparameters for each task, and use the value of the regularization term $\mathbb{E}_{\mathbf{s}, \mathbf{a} \sim \rho(\mathbf{s},\mathbf{a})}\!\left[Q(\mathbf{s},\mathbf{a})\right]\!-\!\mathbb{E}_{\mathbf{s}, \mathbf{a} \sim \data}\!\left[Q(\mathbf{s},\mathbf{s})\right]$ (shown in Eq.~\ref{eq:implicit_update}) to pick hyperparameters in an entirely offline fashion. We select the hyperparameter setting that achieves the lowest regularization objective, which indicates that the Q-values on unseen model-predicted state-action tuples are not overestimated.}
Additional details about the practical implementation and the hyperparameter selection rule are provided in Appendix~\ref{app:combo_details} and Appendix~\ref{app:hyperparameter} respectively.

\section{Theoretical Analysis of COMBO}
\label{sec:theory}
In this section, we theoretically analyze our method and show that it optimizes a lower-bound on the expected return of the learned policy. This lower bound is close to the actual policy performance (modulo sampling error) when the policy's state-action marginal distribution is in support of the state-action marginal of the behavior policy and conservatively estimates the performance of a policy otherwise. By optimizing the policy against this lower bound, COMBO guarantees policy improvement beyond the behavior policy. \neurips{Furthermore, we use these insights to discuss cases when COMBO is less conservative compared to model-free counterparts}.

\begin{subsection}{COMBO Optimizes a Lower Bound}
\label{sec:combo_lower_bound}
We first show that training the Q-function using Eq.~\ref{eq:implicit_update} produces a Q-function such that the expected off-policy policy improvement objective~\citep{degris2012off} 
computed using this learned Q-function lower-bounds its actual value. We will reuse notation for $d_f$ and $d$ from Sections~\ref{sec:prelim} and~\ref{sec:combo}. 
Assuming that the Q-function is tabular, the Q-function found by approximate dynamic programming in iteration $k$, can be obtained by differentiating Eq.~\ref{eq:implicit_update} with respect to $Q^k$ (see App.~\ref{app:proofs} for details):
\begin{equation}
    \hat{Q}^{k+1}(\bs, \ba) = (\bellmanhat^\pi Q^k)(\bs, \ba) - \beta \frac{\rho(\bs, \ba) - d(\bs, \ba)}{d_f(\bs, \ba)}.
\label{eqn:combo_iterate}
\end{equation}
Eq.~\ref{eqn:combo_iterate} effectively applies a penalty that depends on the three distributions appearing in the COMBO critic training objective (Eq.~\ref{eq:implicit_update}), of which $\rho$ and $d_f$ are free variables that we choose in practice as discussed in Section~\ref{sec:combo}. For a given iteration $k$ of Eq.~\ref{eqn:combo_iterate}, we further define the expected penalty under $\rho(\bs, \ba)$ as: 
\begin{equation}
 \nu(\rho, f) := \E_{\bs, \ba \sim \rho(\bs, \ba)}\left[\frac{\rho(\bs, \ba) - d(\bs, \ba)}{d_f(\bs, \ba)} \right]\label{eqn:expected_penalty}.
\end{equation}

Next, we will show that the Q-function learned by COMBO lower-bounds the actual Q-function under the initial state distribution $\mu_0$ and any policy $\pi$. We also show that the asymptotic Q-function learned by COMBO lower-bounds the actual Q-function of any policy $\pi$ with high probability for a large enough $\beta \geq 0$, which we include in Appendix~\ref{app:proof_lower_bound}. Let $\mdpbar$ represent the empirical MDP which uses the empirical transition model based on raw data counts. The Bellman backups over the dataset distribution $d_f$ in Eq.~\ref{eq:implicit_update} that we analyze is an $f-$interpolation 
of the backup operator in the empirical MDP (denoted by $\bellman_{\mdpbar}^\pi$) and the backup operator under the learned model $\mdphat$ (denoted by $\bellman_{\mdphat}^\pi$).
The empirical backup operator suffers from sampling error, but is unbiased in expectation, whereas the model backup operator induces bias but no sampling error.
We assume that all of these backups enjoy concentration properties with concentration coefficient $C_{r, T, \delta}$, dependent on the desired confidence value $\delta$ (details in Appendix~\ref{app:proof_lower_bound}). This is a standard assumption in literature~\citep{laroche2019safe}.
Now, we state our main results below.
\begin{proposition}
\label{thm:lower_bound}
For large enough $\beta$, we have
$\E_{\bs \sim \mu_0, \ba \sim \policy(\cdot|\bs)}[\hat{Q}^\pi(\bs, \ba)] \leq \E_{\bs \sim \mu_0, \ba \sim \policy(\cdot|\bs)}[Q^\pi(\bs, \ba)]$, 
where $\mu_0(\bs)$ is the initial state distribution. 
Furthermore, when $\epsilon_{\text{s}}$ is small, such as in the large sample regime, or when the model bias $\epsilon_{\text{m}}$ is small, a small $\beta$ is sufficient to guarantee this condition along with an appropriate choice of $f$.
\end{proposition}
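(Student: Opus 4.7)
The plan is to analyze the tabular fixed point of the COMBO recursion in Eq.~\ref{eqn:combo_iterate} and relate its expectation under the initial distribution to the true policy value via a performance-difference identity. First I would observe that the fixed point satisfies $\hat{Q}^\pi = \hat{r} - \beta\,(\rho-d)/d_f + \gamma \hat{P}^\pi \hat{Q}^\pi$, where $\hat{P}^\pi$ is the mixture transition kernel induced by the $f$-interpolated backup $\bellmanhat^\pi = f\bellman_{\mdpbar}^\pi + (1-f)\bellman_{\mdphat}^\pi$. Inverting $I - \gamma \hat{P}^\pi$ and taking expectations under $\bs \sim \mu_0,\ \ba \sim \pi$ yields the standard identity
\[
\E_{\mu_0,\pi}[\hat{Q}^\pi] \;=\; \frac{1}{1-\gamma}\,\E_{d^\pi_{\hat{M}_{\text{mix}}}}\!\!\left[\hat{r}(\bs,\ba) - \beta\,\tfrac{\rho(\bs,\ba)-d(\bs,\ba)}{d_f(\bs,\ba)}\right],
\]
and an analogous identity holds for $Q^\pi$ under the true MDP $\mathcal{M}$. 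Subtracting the two gives
\[
\E_{\mu_0,\pi}[\hat{Q}^\pi - Q^\pi] \;=\; \underbrace{\tfrac{1}{1-\gamma}\bigl(\E_{d^\pi_{\hat{M}_{\text{mix}}}}[\hat{r}] - \E_{d^\pi_{\mathcal{M}}}[r]\bigr)}_{(A):\ \text{sampling + model bias}} \;-\; \underbrace{\tfrac{\beta}{1-\gamma}\,\E_{d^\pi_{\hat{M}_{\text{mix}}}}\!\!\left[\tfrac{\rho-d}{d_f}\right]}_{(B):\ \text{penalty contribution}}.
\]

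Next I would bound term $(A)$ using a simulation-lemma-style argument. Because $\bellmanhat^\pi$ is an $f$-interpolation of the empirical and learned-model backups, the gap $|(A)|$ decomposes (up to $1/(1-\gamma)$ factors) into $f\cdot\epsilon_s$, coming from the empirical-MDP backup whose deviation from the true backup is controlled by the concentration coefficient $C_{r,T,\delta}$, plus $(1-f)\cdot\epsilon_m$, coming from the learned-model backup weighted by the occupancy $d^\pi_{\hat{M}_{\text{mix}}}$. This matches the $\epsilon_s$ and $\epsilon_m$ quantities named in the proposition.

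The main obstacle is controlling the sign of $(B)$, since $\E_{d_f}[(\rho-d)/d_f] = 0$ by construction and the sign depends on how $d^\pi_{\hat{M}_{\text{mix}}}$ differs from the reference $d_f$. I would exploit the specific choice $\rho(\bs,\ba) = d^\pi_{\mdphat}(\bs)\pi(\ba|\bs)$ together with $d_f = f\,d + (1-f)\,d^\mu_{\mdphat}$: the mixture-model rollout distribution $d^\pi_{\hat{M}_{\text{mix}}}$ upweights precisely the $(\bs,\ba)$ pairs on which $\rho > d$, namely on-policy rollouts of $\pi$ in $\mdphat$ that drift outside the dataset support. A change-of-measure/convexity argument, together with the fact that $\nu(\rho,f)$ defined in Eq.~\ref{eqn:expected_penalty} is nonnegative by Jensen applied to the convex map $x \mapsto x/(f + (1-f)x)$, yields $\E_{d^\pi_{\hat{M}_{\text{mix}}}}[(\rho-d)/d_f] \geq 0$, with strict inequality unless $\rho = d$ almost everywhere. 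Once $(B)\geq 0$ is established, picking any $\beta$ larger than the threshold $|(A)|/(B)$ forces $\E_{\mu_0,\pi}[\hat{Q}^\pi]\leq\E_{\mu_0,\pi}[Q^\pi]$, which proves the first claim.

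The ``furthermore'' part follows immediately from this threshold: when $\epsilon_s$ is small (large-sample regime) or $\epsilon_m$ is small (accurate model), the numerator $|(A)| \leq (f\epsilon_s + (1-f)\epsilon_m)/(1-\gamma)$ shrinks, so a much smaller $\beta$ suffices. Moreover, tuning $f$ to trade off the two error sources (close to $1$ when $\epsilon_s \ll \epsilon_m$ and close to $0$ in the opposite regime) further tightens the threshold, completing the proof.
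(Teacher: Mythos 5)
There is a genuine gap at the crux of your argument: the sign of your term $(B)$. By inverting the \emph{mixture} kernel $I-\gamma\hat{P}^\pi$ you end up with the penalty $\frac{\rho-d}{d_f}$ averaged under the occupancy of the $f$-interpolant MDP, and you then assert this average is nonnegative via a "change-of-measure/convexity" argument. Lemma~\ref{thm:line_thm} gives nonnegativity of $\E_{\rho}\big[\tfrac{\rho-d}{d_f}\big]=\nu(\rho,f)$ only when the averaging distribution is $\rho$ itself; under a different averaging distribution the sign can flip. Indeed, Remark~\ref{remak:tighter_lower_bound} in the paper shows that the same quantity averaged under $d(\bs,\ba)$ is \emph{strictly negative}, and the interpolant occupancy contains an empirical-MDP component (weight $f$) that concentrates precisely on such dataset-like states where $d>\rho$, so $\E_{d^\pi_{\text{mix}}}\big[\tfrac{\rho-d}{d_f}\big]\geq 0$ is not guaranteed and can fail. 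If $(B)<0$, increasing $\beta$ pushes $\E_{\mu_0,\pi}[\hat{Q}^\pi]$ \emph{up}, so the "pick $\beta$ larger than $|(A)|/(B)$" step collapses; your claim that the mixture occupancy "upweights precisely the pairs on which $\rho>d$" is exactly what would need proof and is false in general.

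The paper's proof avoids this by grouping the recursion around the learned-model backup alone: it writes the iterate as $\bellman^\pi_{\mdphat}\hat{Q}^k - \beta\tfrac{\rho-d}{d_f} + f(\bellman^\pi_{\mdpbar}-\bellman^\pi_{\mdphat})\hat{Q}^k$, bounds the $f$-weighted backup discrepancy as a $\beta$-independent error (via Assumption~\ref{assumption:conc} and the model-error bound), and iterates only $\bellman^\pi_{\mdphat}$ to its fixed point. Because $\rho(\bs,\ba)=\big((\mu_0\cdot\pi)^\top S^\pi_{\mdphat}\big)(\bs,\ba)$ by the very choice $\rho = d^\pi_{\mdphat}\,\pi$, propagating the penalty through $S^\pi_{\mdphat}$ and taking the expectation under $\mu_0,\pi$ yields exactly $\nu(\rho,f)$, which Lemma~\ref{thm:line_thm} makes positive whenever $\rho\neq d$ and $f>0$; then a large enough $\beta$ dominates the remaining $\beta$-independent sampling and model-bias terms, and small $\epsilon_s$ or $\epsilon_m$ (with an appropriate $f$) shrinks the required $\beta$. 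Your term-$(A)$ analysis and the "furthermore" discussion are fine in spirit, but to repair the proof you must either regroup the recursion as the paper does so the penalty is averaged under $\rho$, or supply a genuine proof that the interpolant occupancy makes the penalty nonnegative --- which is not available in general.
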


The proof for Proposition~\ref{thm:lower_bound} can be found in Appendix~\ref{app:proof_lower_bound}.
Finally, while \citet{kumar2020conservative} also analyze how regularized value function training can provide lower bounds on the value function at each state in the dataset~\citep{kumar2020conservative} (Proposition 3.1-3.2), \neurips{our result shows that COMBO is less conservative in that it does not underestimate the value function at every state in the dataset like CQL (Remark~\ref{remak:tighter_lower_bound}) and might even overestimate these values. Instead COMBO penalizes Q-values at states generated via model rollouts from $\rho(\bs, \ba)$. 
\iclr{Note that in general, the required value of $\beta$ may be quite large similar to prior works, which typically utilize a large constant $\beta$, which may be in the form of a penalty on a regularizer~\citep{liu2020provably,kumar2020conservative} or as constants in theoretically optimal algorithms~\citep{jin2021pessimism,rashidinejad2021bridging}.}
While it is challenging to argue that that either COMBO or CQL attains the tightest possible lower-bound on return, in our final result of this section, we discuss a sufficient condition for the COMBO lower-bound to be tighter than CQL. 
\begin{proposition}
\label{prop:less_conservative}
Assuming previous notation, let $\Delta^\pi_{\text{COMBO}} := \E_{\bs, \ba \sim d_{\mdpbar}(\bs), \pi(\ba|\bs)}\left[ \hat{Q}^\pi(\bs, \ba) \right]$ and $\Delta^\pi_{\text{CQL}} := \E_{\bs, \ba \sim d_{\mdpbar}(\bs), \pi(\ba|\bs)}\left[ \hat{Q}^\pi_\text{CQL}(\bs, \ba) \right]$ denote the average values on the dataset under the Q-functions learned by COMBO and CQL respectively. 
Then, $\Delta^\pi_{\text{COMBO}} \geq \Delta^\pi_\text{CQL}$, if:
\begin{equation*}
    \E_{\bs, \ba \sim \rho(\bs,\ba)}\left[ \frac{\pi(\ba|\bs)}{\behavior(\ba|\bs)} \right] - \E_{\bs, \ba \sim d_{\mdpbar}(\bs), \pi(\ba|\bs)}\left[ \frac{\pi(\ba|\bs)}{\behavior(\ba|\bs)} \right] \leq 0.~~~~~~~~~~~(*)
\end{equation*}
\end{proposition}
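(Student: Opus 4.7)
The plan is to derive explicit tabular fixed-point expressions for $\hat{Q}^\pi_{\text{COMBO}}$ and $\hat{Q}^\pi_{\text{CQL}}$, both of which have the form ``Bellman backup minus a state-action penalty'', and then reduce the inequality $\Delta^\pi_{\text{COMBO}} \geq \Delta^\pi_{\text{CQL}}$ to a comparison of those penalties in expectation under $d_{\mdpbar}(\bs)\pi(\ba|\bs)$.

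First, I would use Eq.~\ref{eqn:combo_iterate} to write the COMBO fixed point as
\begin{equation*}
\hat{Q}^{\pi}_{\text{COMBO}}(\bs,\ba) \;=\; \bigl(\bellmanhat^{\pi}_{d_f}\hat{Q}^{\pi}_{\text{COMBO}}\bigr)(\bs,\ba) \;-\; \beta\,\frac{\rho(\bs,\ba)-d(\bs,\ba)}{d_f(\bs,\ba)},
\end{equation*}
where $\bellmanhat^{\pi}_{d_f}$ averages the empirical-MDP and learned-model transition kernels with weights proportional to $d_f$. Differentiating the CQL loss in Eq.~\ref{eq:cql_loss} in the same manner, with the standard CQL choice $\mu = \pi$, yields
\begin{equation*}
\hat{Q}^{\pi}_{\text{CQL}}(\bs,\ba) \;=\; \bigl(\bellmanhat^{\pi}_{\mdpbar}\hat{Q}^{\pi}_{\text{CQL}}\bigr)(\bs,\ba) \;-\; \beta\left(\frac{\pi(\ba|\bs)}{\behavior(\ba|\bs)} - 1\right).
\end{equation*}

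Second, I would solve both equations as Neumann series, subtract, and take the expectation under $d_{\mdpbar}(\bs)\pi(\ba|\bs)$. The difference $\Delta^\pi_{\text{COMBO}} - \Delta^\pi_{\text{CQL}}$ then splits into (a) a contribution from the mismatch between $\bellmanhat^{\pi}_{d_f}$ and $\bellmanhat^{\pi}_{\mdpbar}$ and (b) a $\beta$-scaled difference of the propagated penalty expectations. Invoking the concentration assumption with coefficient $C_{r,T,\delta}$ together with the model-bias control of Proposition~\ref{thm:lower_bound} (of order $\epsmodel$), part (a) is lower order in the regime of interest, so the sign is governed by (b). To reduce (b), I would substitute $d(\bs,\ba) = d^{\behavior}(\bs)\behavior(\ba|\bs)$ and $\rho(\bs,\ba) = d^{\pi}_{\mdphat}(\bs)\pi(\ba|\bs)$, collect the constant term from the CQL penalty against the $d/d_f$ piece of the COMBO penalty, and simplify. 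After algebra, (b) is non-negative precisely when $\E_{\rho}[\pi/\behavior] - \E_{d_{\mdpbar},\pi}[\pi/\behavior] \leq 0$, which is condition $(*)$.

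The main obstacle I anticipate is controlling the Bellman-operator mismatch in (a). Because $\bellmanhat^{\pi}_{d_f}$ blends the model and empirical transitions with data-dependent weights, the ``unpenalized'' parts of $\hat{Q}^{\pi}_{\text{COMBO}}$ and $\hat{Q}^{\pi}_{\text{CQL}}$ are not identical; they differ by model-bias terms propagated through $(I - \gamma P^{\pi})^{-1}$. Handling this cleanly requires invoking the same large-$\beta$ / small-$\epsmodel$ regime already used in Proposition~\ref{thm:lower_bound}, so that (b) dominates (a) and the reduction to $(*)$ is rigorous. Once that is in place, the remainder is routine algebraic simplification of the explicit densities $\rho$, $d$, and $d_f$.
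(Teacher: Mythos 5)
Your core algebraic step is the same one the paper uses: write each learned Q-function as the true $Q^\pi$ minus a $\beta$-scaled penalty ($\frac{\pi(\ba|\bs)-\behavior(\ba|\bs)}{\behavior(\ba|\bs)}$ for CQL with $\mu=\pi$, $\frac{\rho(\bs,\ba)-d(\bs,\ba)}{d_f(\bs,\ba)}$ for COMBO), cancel the CQL term against the $d/d_f$ piece of the COMBO penalty, and then use $d(\bs,\ba)=d^{\behavior}(\bs)\behavior(\ba|\bs)$, $\rho(\bs,\ba)=d^\pi_{\mdphat}(\bs)\pi(\ba|\bs)$ and $d_{\mdpbar}(\bs)=d^{\behavior}(\bs)$ to read off condition $(*)$. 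The paper does this by adding and subtracting $d^{\behavior}(\bs)\pi(\ba|\bs)$ in the numerator, and crucially it does so only for $f=1$, so that $d_f=d$ and $d/d_f\equiv 1$ matches CQL's constant exactly; it explicitly states that $f\neq 1$ yields a different, less interpretable condition. Your write-up keeps $d_f$ general and still claims the algebra lands exactly on $(*)$ --- it does not, and this needs either the restriction $f=1$ or a different final condition.

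The more substantive gap is in how you treat the fixed points. The paper compares the \emph{unpropagated} per-state-action penalties, i.e.\ it takes $\hat{Q}^\pi_{\text{CQL}}=Q^\pi-\beta\frac{\pi-\behavior}{\behavior}$ and $\hat{Q}^\pi=Q^\pi-\beta\frac{\rho-d}{d_f}$ at face value (no sampling error, no model bias, no resolvent $(I-\gamma P^\pi)^{-1}$ applied to the penalty), so the true $Q^\pi$ cancels and only the one-step penalties are compared under $d_{\mdpbar}(\bs)\pi(\ba|\bs)$. Once you instead solve both recursions by Neumann series with their genuinely different backup operators, two things break: (i) the two penalties are weighted by different resolvents (that of $f\bellman^\pi_{\mdpbar}+(1-f)\bellman^\pi_{\mdphat}$ versus that of $\bellman^\pi_{\mdpbar}$), so the expectation under $d_{\mdpbar}(\bs)\pi(\ba|\bs)$ of the penalty difference reduces to a condition on the discounted occupancy started from $d_{\mdpbar}$, not to $(*)$ itself; and (ii) your term (a), the backup mismatch, cannot simply be declared lower order. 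Condition $(*)$ is a weak inequality, so your term (b) may be zero, and then any uncontrolled sampling-error or model-bias contribution in (a) defeats the conclusion; neither taking $\beta$ large (which also scales (b)) nor invoking the regime of Proposition~\ref{thm:lower_bound} bounds (a) relative to (b) without a strict-gap assumption that is not part of the proposition. The fix is to adopt the paper's simplifications explicitly (set $f=1$ and work in the idealized no-error setting so both fixed points share the same backup and the penalties are compared directly); as written, the reduction of (b) to $(*)$ and the disposal of (a) are both gaps.
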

Proposition~\ref{prop:less_conservative} indicates that COMBO will be less conservative than CQL when the action probabilities under learned policy $\pi(\ba|\bs)$ and the probabilities under the behavior policy $\pi_\beta(\ba|\bs)$ are closer together on state-action tuples drawn from $\rho(\bs, \ba)$ (i.e., sampled from the model using the policy $\pi(\ba|\bs)$), than they are on states from the dataset and actions from the policy, $d_{\mdpbar}(\bs)\pi(\ba|\bs)$.
COMBO's objective (Eq.~\ref{eq:implicit_update}) only penalizes Q-values under $\rho(\bs, \ba)$, which, in practice, are expected to primarily consist of out-of-distribution states generated from model rollouts, and does not penalize the Q-value at states drawn from $d_{\mdpbar}(\bs)$. As a result, the expression $(*)$ is likely to be negative, making COMBO less conservative than CQL.
} 

\subsection{Safe Policy Improvement Guarantees}
\label{sec:policy_improvement_theory}
Now that we have shown various aspects of the lower-bound on the Q-function induced by COMBO,
we provide policy improvement guarantees for the COMBO algorithm. Formally, Proposition~\ref{thm:policy_improvement} discuss safe improvement guarantees over the behavior policy. building on prior work~\citep{petrik2016safe,laroche2019safe,kumar2020conservative}. 
\begin{proposition}[$\zeta$-safe policy improvement]
\label{thm:policy_improvement}
Let $\hat{\pi}_{\text{out}}(\ba|\bs)$ be the policy obtained by COMBO.
Then, \iclr{if $\beta$ is sufficiently large and $\nu(\rho^\pi, f) - \nu(\rho^\beta, f) \geq C$ for a positive constant $C$}, the policy $\hat{\pi}_{\text{out}}(\ba|\bs)$ is a $\zeta$-safe policy improvement over ${\behavior}$ in the actual MDP $\mdp$, i.e., $J(\hat{\pi}_{\text{out}}, \mdp) \geq J({\behavior}, \mdp) - \zeta$, with probability at least $1 - \delta$, where $\zeta$ is given by,
\small{
\begin{align*}
    \mathcal{O}\left(\frac{\gamma f}{(1 - \gamma)^2}\right) \underbrace{\E_{\bs \sim d^{\hat{\pi}_{\text{out}}}_{\mdp}}\left[ \sqrt{\frac{|\actions|}{|\data(\bs)|} \mathrm{D}_{\text{CQL}}(\hat{\pi}_{\text{out}}, \behavior)} \right]}_{:=~ \text{(1)}} + \mathcal{O}\left(\frac{\gamma (1 - f)}{(1 - \gamma)^2}\right) \underbrace{ \mathrm{D_{TV}}(\mdpbar, \mdphat)}_{:=~ \text{(2)}} - \underbrace{\beta \frac{C}{(1 - \gamma)}}_{:=~ \text{(3)}}.
\end{align*}
}
\end{proposition}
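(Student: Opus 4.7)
The plan is to interpret COMBO's learned Q-function as the value function in a modified auxiliary MDP, use the optimality of $\hat{\pi}_{\text{out}}$ under this modified objective to get an improvement inequality, and then translate back to the true MDP $\mdp$ using simulation-lemma style bounds that split into empirical-sampling and model-bias contributions.

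First, I would recast the fixed point of the COMBO Bellman recursion (Eq.~\ref{eqn:combo_iterate}) as the value function of a modified MDP $\tilde{\mdp}_f$ with reward $\tilde{r}(\bs,\ba) = r(\bs,\ba) - \beta (\rho(\bs,\ba) - d(\bs,\ba))/d_f(\bs,\ba)$ and transitions $P_f = f P_{\mdpbar} + (1-f) P_{\mdphat}$. This yields the identity
\[
\tilde{J}(\pi) := \E_{\bs \sim \mu_0, \ba \sim \pi}[\hat{Q}^\pi(\bs,\ba)] = J(\pi, \mdp_f) - \frac{\beta}{1-\gamma}\, \nu(\rho^\pi, f),
\]
where $\mdp_f$ is the MDP with transitions $P_f$ and reward $r$. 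Because $\hat{\pi}_{\text{out}}$ is the maximizer of $\tilde{J}$ by Eq.~\ref{eq:combo_policy_improvement}, we have $\tilde{J}(\hat{\pi}_{\text{out}}) \geq \tilde{J}(\behavior)$, which rearranges to
\[
J(\hat{\pi}_{\text{out}}, \mdp_f) \geq J(\behavior, \mdp_f) + \frac{\beta}{1-\gamma}\bigl[\nu(\rho^{\hat{\pi}_{\text{out}}}, f) - \nu(\rho^\behavior, f)\bigr] \geq J(\behavior, \mdp_f) + \frac{\beta C}{1-\gamma},
\]
by the assumed lower bound $\nu(\rho^\pi,f)-\nu(\rho^\beta,f)\geq C$. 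This produces term (3) of $\zeta$.

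Second, I would bound $|J(\pi, \mdp_f) - J(\pi, \mdp)|$ for both $\pi = \hat{\pi}_{\text{out}}$ and $\pi = \behavior$ via a telescoping simulation-lemma decomposition that splits into the portion using $P_{\mdpbar}$ (weight $f$) and the portion using $P_{\mdphat}$ (weight $1-f$). The empirical sampling error is controlled by the standard concentration argument (the $C_{r,T,\delta}$ assumption referenced just before the statement): evaluating $\pi$ via samples collected under $\behavior$ introduces an importance-weight-type factor, which, summed over $d^{\hat{\pi}_{\text{out}}}_\mdp$, yields the CQL-style divergence $\E_{\bs \sim d^{\hat{\pi}_{\text{out}}}_\mdp}[\sqrt{|\actions| D_{\mathrm{CQL}}(\hat{\pi}_{\text{out}},\behavior)/|\data(\bs)|}]$, scaled by the effective horizon factor $\gamma f / (1-\gamma)^2$ (term (1)). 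The model-bias portion follows from a direct simulation lemma applied to $(\mdpbar, \mdphat)$ and gives $\mathcal{O}(\gamma(1-f)/(1-\gamma)^2)\, D_{\mathrm{TV}}(\mdpbar,\mdphat)$ (term (2)).

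Third, I would combine these into
\[
J(\hat{\pi}_{\text{out}}, \mdp) \geq J(\hat{\pi}_{\text{out}}, \mdp_f) - (1) - (2) \geq J(\behavior, \mdp_f) + \tfrac{\beta C}{1-\gamma} - (1) - (2) \geq J(\behavior, \mdp) - \zeta,
\]
absorbing the reverse direction $J(\behavior, \mdp_f) \geq J(\behavior, \mdp) - (\text{sampling/model error on }\behavior)$ into the leading constants of terms (1) and (2) (since evaluating $\behavior$ on its own data only improves the sampling concentration). The requirement ``$\beta$ sufficiently large'' ensures that the lower-bound property of $\hat{Q}^\pi$ from Proposition~\ref{thm:lower_bound} holds, which is what licenses step one.

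The main obstacle is deriving the first error term in its exact $D_{\mathrm{CQL}}$-weighted form: one has to carry the importance-weighting $\pi/\behavior$ through the empirical Bellman backup concentration bound carefully, cast the resulting second moment as the CQL divergence, and verify that the $f$ weight threads through cleanly from the interpolated operator into the horizon-amplified final bound. The mixed MDP $\mdp_f$ also makes the simulation-lemma decomposition slightly nonstandard, since its occupancy does not factor linearly into occupancies under $\mdpbar$ and $\mdphat$, so I would invoke a one-step coupling argument to linearize the error contributions at the transition level before summing over the horizon.
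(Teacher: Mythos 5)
Your proposal follows essentially the same route as the paper's proof: the paper likewise recasts COMBO as maximizing the return in an $f$-interpolant MDP penalized by $\beta\,\nu(\rho^\pi,f)/(1-\gamma)$ (its Equation~\ref{eqn:penalized_objective}), uses the optimality of $\hat{\pi}_{\text{out}}$ against $\behavior$ to extract term (3), and relates returns in the interpolant MDP to the true MDP via a simulation-lemma-style interpolation bound (Lemma~\ref{lemma:interpolant_regular_bound}) whose empirical-dynamics piece is converted into the $D_{\text{CQL}}$-weighted sampling-error term by the concentration argument borrowed from CQL, with the $(1-f)$-weighted piece giving the total-variation model-bias term. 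The "obstacles" you flag (threading the importance weights into the $D_{\text{CQL}}$ form and linearizing the mixed-dynamics error at the transition level before summing over the horizon) are exactly what the paper resolves by adding and subtracting the auxiliary MDP inside the simulation lemma and invoking the CQL concentration lemma, so your plan is sound and matches the paper's argument.
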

The complete statement (with constants and terms that grow smaller than quadratic in the horizon) and proof for Proposition~\ref{thm:policy_improvement} is provided in Appendix~\ref{app:proof_policy_improvement}. $D_{\text{CQL}}$ denotes a notion of probabilistic distance between policies~\citep{kumar2020conservative} which we discuss further in Appendix~\ref{app:proof_policy_improvement}. The expression for $\zeta$ in Proposition~\ref{thm:policy_improvement} consists of three terms: term (1)~captures the decrease in the policy performance due to limited data, and decays as the size of $\mathcal{D}$ increases. The second term (2)~captures the suboptimality induced by the bias in the learned model. Finally, as we show in Appendix~\ref{app:proof_policy_improvement}, the third term (3) comes from $\nu(\rho^\pi, f) - \nu(\rho^\beta, f)$, which is equivalent to the improvement in policy performance as a result of running COMBO in the empirical and model MDPs. Since the learned model is trained on the dataset $\data$ with transitions generated from the behavior policy $\behavior$, the marginal distribution $\rho^\beta(\bs, \ba)$ is expected to be closer to $d(\bs, \ba)$ for $\behavior$ as compared to the counterpart for the learned policy, $\rho^\pi$. \iclr{Thus, the assumption that $\nu(\rho^\pi, f) - \nu(\rho^\beta, f)$ is positive is reasonable}, and in such cases, an appropriate (large) choice of $\beta$ will make term (3) large enough to counteract terms (1) and (2) that reduce policy performance. We discuss this elaborately in Appendix~\ref{app:proof_policy_improvement} (Remark~\ref{remark:remark1}). 

Further note that in contrast to Proposition 3.6 in \citet{kumar2020conservative}, note that our result indicates the sampling error (term (1)) is reduced (multiplied by a fraction $f$) when a near-accurate model is used to augment data for training the Q-function, \neurips{and similarity, it can avoid the bias of model-based methods by relying more on the model-free component. This allows COMBO to attain the best-of-both model-free and model-based methods, via a suitable choice of the fraction $f$.}

To summarize,
through an appropriate choice of $f$, Proposition~\ref{thm:policy_improvement} guarantees safe improvement over the behavior policy without requiring access to an oracle uncertainty estimation algorithm.
\end{subsection}

\section{Experiments}
\label{sec:exp}

In our experiments, we aim to answer the follow questions: \neurips{(1) Can COMBO generalize better than previous offline model-free and model-based approaches in a setting that requires generalization to tasks that are different from what the behavior policy solves?
(2) How does COMBO compare with prior work in tasks with high-dimensional image observations? (3) How does COMBO compare to prior offline model-free and model-based methods in standard offline RL benchmarks? }

To answer those questions, we compare COMBO to several prior methods. In the domains with compact state spaces, we compare with recent model-free algorithms like BEAR~\citep{kumar2019stabilizing}, BRAC~\citep{wu2019behavior}, and CQL~\citep{kumar2020conservative}; as well as MOPO~\citep{yu2020mopo} \neurips{and MOReL~\citep{kidambi2020morel} which are two recent model-based algorithms}. In addition, we also compare with an offline version of SAC~\citep{haarnoja2018soft} (denoted as SAC-off), and behavioral cloning (BC).
In high-dimensional image-based domains, which we use to answer question (3), we compare to LOMPO~\citep{Rafailov2020LOMPO}, which is a latent space offline model-based RL method that handles image inputs, latent space MBPO (denoted LMBPO), similar to \citet{janner2019trust} which uses the model to generate additional synthetic data, the fully offline version of SLAC \citep{lee2019SLAC} (denoted SLAC-off),
which only uses a variational model for state representation purposes, and CQL from image inputs. To our knowledge, CQL, MOPO, and LOMPO are representative of state-of-the-art model-free and model-based offline RL methods. Hence we choose them as comparisons to COMBO.
\iclr{To highlight the distinction between COMBO and a na\"{i}ve combination of CQL and MBPO, we perform such a comparison in Table~\ref{tbl:cql_mbpo} in Appendix~\ref{app:cql_mbpo}}. For more details of our experimental set-up, comparisons, and hyperparameters, see Appendix~\ref{app:details}.

\subsection{Results on tasks that require generalization}
\label{sec:generalization_exps}

\begin{wraptable}{r}{10.5cm}
\centering
\scriptsize
\begin{tabular}{l|r|r|r|r|r|r}
\toprule 
\textbf{Environment} & \stackanchor{\textbf{Batch}}{\textbf{Mean}} & \stackanchor{\textbf{Batch}}{\textbf{Max}} & \stackanchor{\textbf{COMBO}}{\textbf{(Ours)}} & \textbf{MOPO} & \neurips{\textbf{MOReL}} & \textbf{CQL}\\ \midrule
halfcheetah-jump & -1022.6 & 1808.6 & \textbf{5308.7}$\pm$575.5 & 4016.6 & \neurips{3228.7} & 741.1\\
ant-angle & 866.7 & 2311.9 & \textbf{ 2776.9}$\pm$43.6 & 2530.9 & \neurips{2660.3} & 2473.4\\
sawyer-door-close & 5\% & 100\% & \textbf{98.3}\%$\pm$3.0\% & 65.8\% & 42.9\% & 36.7\%\\
\bottomrule
\end{tabular}
\vspace{-0.2cm}
\caption{
\footnotesize Average returns of \texttt{halfcheetah-jump} and \texttt{ant-angle} and average success rate of \texttt{sawyer-door-close} that require out-of-distribution generalization. All results are averaged over 6 random seeds. We include the mean and max return / success rate of episodes in the batch data (under Batch Mean and Batch Max, respectively) for comparison. We also include the 95\%-confidence interval for COMBO.
}
\label{tbl:generalize}
\normalsize
\end{wraptable}

To answer question (1), we use two environments \texttt{halfcheetah-jump} and \texttt{ant-angle} constructed in \citet{yu2020mopo}, which requires the agent to solve a task that is different from what the behavior policy solved. In both environments, the offline dataset is collected by policies trained with original reward functions of \texttt{halfcheetah} and \texttt{ant}, which reward the robots to run as fast as possible. The behavior policies are trained with SAC with 1M steps and we take the full replay buffer as the offline dataset. Following \citet{yu2020mopo}, we relabel rewards in the offline datasets to reward the halfcheetah to jump as high as possible and the ant to run to the top corner with a 30 degree angle as fast as possible. Following the same manner, we construct a third task \texttt{sawyer-door-close} based on the environment in \citet{yu2020meta, Rafailov2020LOMPO}. In this task, we collect the offline data with SAC policies trained with a sparse reward function that only gives a reward of 1 when the door is \textit{opened} by the sawyer robot and 0 otherwise. The offline dataset is similar to the ``medium-expert`` dataset in the D4RL benchmark since we mix equal amounts of data collected by a fully-trained SAC policy and a partially-trained SAC policy. We relabel the reward such that it is 1 when the door is \textit{closed} and 0 otherwise. Therefore, in these datasets, the offline RL methods must generalize beyond behaviors in the offline data in order to learn the intended behaviors. We visualize the \texttt{sawyer-door-close} environment in the right image in Figure~\ref{fig:visual} in Appendix~\ref{app:image_details}.

We present the results on the three tasks in Table~\ref{tbl:generalize}. \neurips{COMBO significantly outperforms MOPO, MOReL and CQL, two representative model-based methods and one representative model-free methods respectively}, in the \texttt{halfcheetah-jump} and \texttt{sawyer-door-close} tasks, and \neurips{achieves an approximately 8\%, 4\% and 12\% improvement over MOPO, MOReL and CQL respectively on the \texttt{ant-angle} task}. These results validate that COMBO achieves better generalization results in practice by behaving less conservatively than prior model-free offline methods (compare to CQL, which doesn't improve much), and does so more robustly than prior model-based offline methods (compare to MOReL and MOPO).

\subsubsection{Empirical analysis on uncertainty estimation in offline model-based RL}
\label{sec:uq}

\begin{wrapfigure}{r}{0.65\textwidth}
\vspace{-0.5cm}
    \centering
    \includegraphics[width=0.33\textwidth]{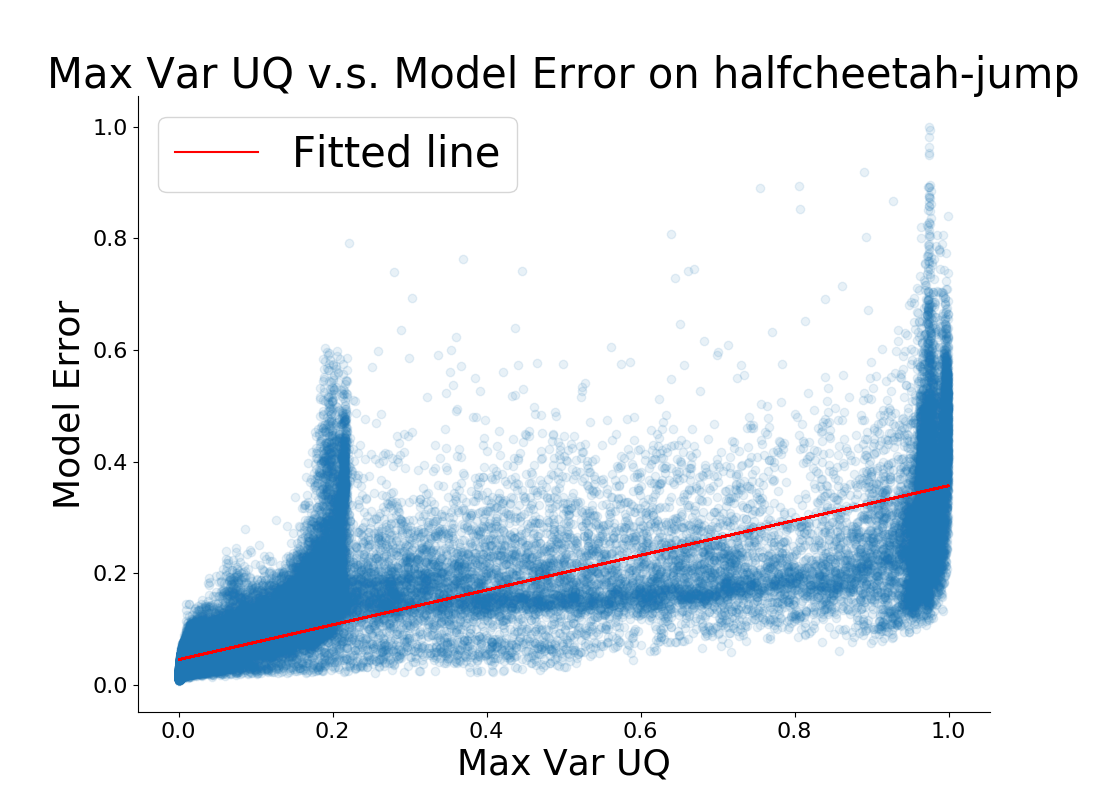}
    \includegraphics[width=0.3\textwidth]{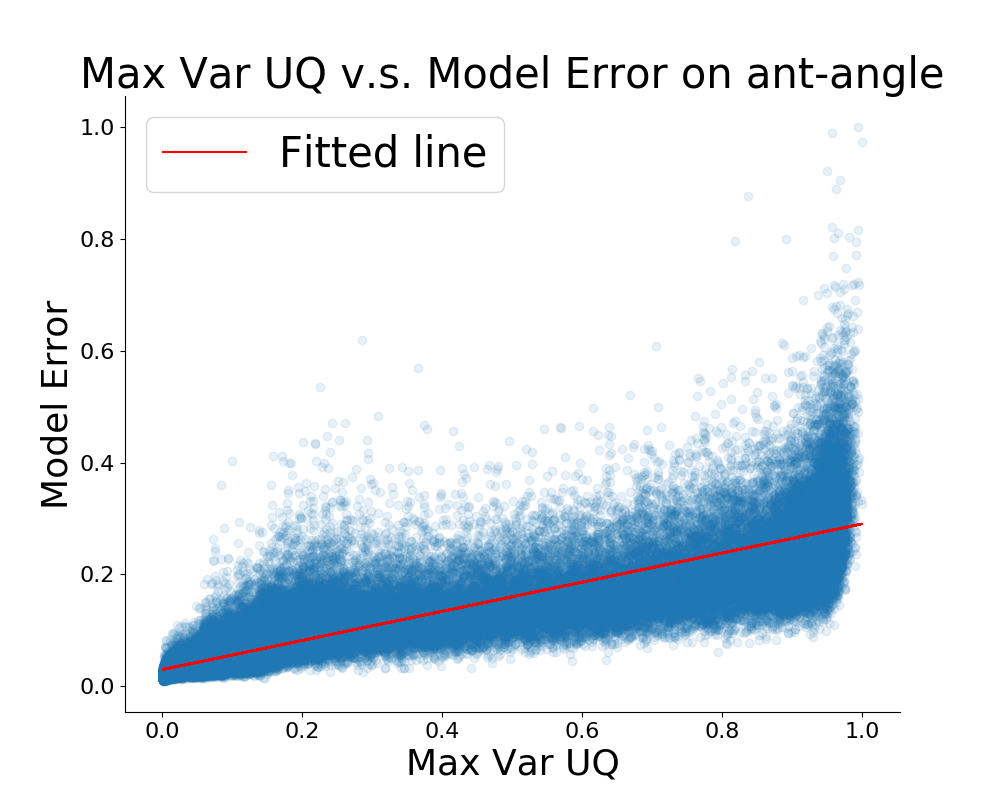}
    \vspace{-0.2cm}
    \caption{\footnotesize
    \neurips{We visualize the fitted linear regression line between the model error and two uncertainty quantification methods maximum learned variance over the ensemble (denoted as \textbf{Max Var}) on two tasks that test the generalization abilities of offline RL algorithms (\texttt{halfcheetah-jump} and \texttt{ant-angle}). We show that \textbf{Max Var} struggles to predict the true model error. Such visualizations indicates that uncertainty quantification is challenging with deep neural networks and could lead to poor performance in model-based offline RL in settings where out-of-distribution generalization is needed. In the meantime, COMBO addresses this issue by removing the burden of performing uncertainty quantification.}}
    \label{fig:uq}
    \vspace{-0.4cm}
\end{wrapfigure}

\neurips{To further understand why COMBO outperforms prior model-based methods in tasks that require generalization, we argue that one of the main reasons could be that uncertainty estimation is hard in these tasks where the agent is required to go further away from the data distribution. To test this intuition, we perform empirical evaluations to study whether uncertainty quantification with deep neural networks, especially in the setting of dynamics model learning, is challenging and could cause problems with uncertainty-based model-based offline RL methods such as MOReL~\citep{kidambi2020morel} and MOPO~\citep{yu2020mopo}. In our evaluations, we consider maximum learned variance over the ensemble (denoted as \textbf{Max Var}) $\max_{i=1,\dots,N}\|\Sigma^i_\theta(\bs,\ba)\|_\text{F}$ (used in MOPO).

We consider two tasks \texttt{halfcheetah-jump} and \texttt{ant-angle}. We normalize both the model error and the uncertainty estimates to be within scale $[0, 1]$ and performs linear regression that learns the mapping between the uncertainty estimates and the true model error. As shown in Figure~\ref{fig:uq}, on both tasks, \textbf{Max Var} is unable to accurately predict the true model error, suggesting that uncertainty estimation used by offline model-based methods is not accurate and might be the major factor that results in its poor performance. Meanwhile, COMBO circumvents challenging uncertainty quantification problem and achieves better performances on those tasks, indicating the effectiveness and the robustness of the method.}

\subsection{Results on image-based tasks}
\vspace{-0.1cm}
To answer question (2), we evaluate COMBO on two image-based environments: the standard walker (\texttt{walker-walk}) task from the the DeepMind Control suite \citep{tassa2018deepmind} and a visual door opening environment with a Sawyer robotic arm (\texttt{sawyer-door}) as used in Section~\ref{sec:generalization_exps}.
\begin{wraptable}{r}{9cm}
\vspace{-0.4cm}
\centering
\scriptsize
\begin{tabular}{l|l|r|r|r|r|r}
\toprule
\textbf{\!\!\!Dataset\!\!} & \textbf{Environment} & \vtop{\hbox{\bf \strut COMBO}\hbox{\strut \bf (Ours)}}  & \textbf{LOMPO}& \textbf{LMBPO}& \vtop{\hbox{\bf \strut SLAC}\hbox{\strut \bf -Off}} & \textbf{CQL}\\ \midrule

\!\!\!M-R           & walker\_walk & \textbf{69.2}  & 66.9 & 59.8  & 45.1 & 15.6          \\
\!\!\!M             & walker\_walk & 57.7  & 60.2 & \textbf{61.7}  & 41.5 & 38.9          \\
\!\!\!M-E           & walker\_walk & 76.4  & \textbf{78.9} & 47.3  & 34.9 & 36.3          \\
\!\!\!expert        & walker\_walk & \textbf{61.1}  & 55.6 & 13.2  & 12.6 & 43.3          \\

\!\!\!M-E\!\!\!     & sawyer-door &\textbf{100.0\%} & \textbf{100.0\%}& 0.0\%  & 0.0\% & 0.0\% \\
\!\!\!expert\!\!\!        & sawyer-door &\textbf{96.7\%}  & 0.0\%           & 0.0\% & 0.0\%  & 0.0\% \\
\bottomrule
\end{tabular}
\vspace{-0.2cm}
\caption{\footnotesize Results for vision experiments. %
For the Walker task each number is the normalized score proposed in \cite{fu2020d4rl} of the policy at the last iteration of training, averaged over 3 random seeds. For the Sawyer task, we report success rates over the last 100 evaluation runs of training. For the dataset, M refers to medium, M-R refers to medium-replay, and M-E refers to medium expert.}
\label{tbl:vision}
\normalsize
\vspace{-0.5cm}
\end{wraptable}
For the walker task we construct 4 datasets: medium-replay (M-R), medium (M), medium-expert (M-E), and expert, similar to \citet{fu2020d4rl}, each consisting of 200 trajectories. For \texttt{sawyer-door} task we use only the medium-expert and the expert datasets, due to the sparse reward -- the agent is rewarded only when it successfully opens the door. Both environments are visulized in Figure~\ref{fig:visual} in Appendix~\ref{app:image_details}.
To extend COMBO to the image-based setting, we follow \citet{Rafailov2020LOMPO} and train a recurrent variational model using the offline data and use train COMBO in the latent space of this model.
We present results in Table~\ref{tbl:vision}. On the \texttt{walker-walk} task, COMBO performs in line with LOMPO and previous methods. On the more challenging Sawyer task, COMBO matches LOMPO and achieves 100\% success rate on the medium-expert dataset, and substantially outperforms all other methods on the narrow expert dataset, achieving an average success rate of 96.7\%, when all other model-based and model-free methods fail.

\vspace*{-5pt}
\subsection{Results on the D4RL tasks}
\label{sec:d4rl_exps}

\neurips{Finally, to answer the question (3)}, we evaluate COMBO on the OpenAI Gym~\citep{brockman2016openai} domains in the D4RL benchmark~\citep{fu2020d4rl}, which contains three environments (halfcheetah, hopper, and walker2d) and four dataset types (random, medium, medium-replay, and medium-expert). We include the results in Table~\ref{tbl:d4rl}. The numbers of BC, SAC-off, BEAR, BRAC-P and BRAC-v are taken from the D4RL paper, while the results for MOPO, \neurips{MOReL} and CQL are based on their respective papers~\citep{yu2020mopo,kumar2020conservative}. 
\neurips{COMBO achieves the best performance in 9 out of 12 settings and comparable result in 1 out of the remaining 3 settings (hopper medium-replay).}
As noted by \citet{yu2020mopo} and \citet{Rafailov2020LOMPO}, model-based offline methods are generally more performant on datasets that are collected by a wide range of policies and have diverse state-action distributions (random, medium-replay datasets)
while model-free approaches do better on datasets with narrow distributions (medium, medium-expert datasets). However, in these results, \neurips{COMBO generally performs well across dataset types compared to existing model-free and model-based approaches, suggesting that COMBO is robust to different dataset types.}

\begin{table*}[t!]
\centering
\vspace*{0.1cm}
\small
\resizebox{0.98\textwidth}{!}{\begin{tabular}{l|l|r|r|r|r|r|r|r|r|r}
\toprule
\textbf{\!\!\!Dataset type\!\!} & \textbf{Environment} & \textbf{BC} & \vtop{\hbox{\bf \strut COMBO}\hbox{\strut \bf (Ours)}}& \textbf{MOPO} & \neurips{\textbf{MOReL}} & \textbf{CQL} & \textbf{SAC-off} & \textbf{BEAR} & \textbf{BRAC-p} & \textbf{BRAC-v}\\ \midrule
\!\!\!random & halfcheetah & 2.1 & \textbf{38.8}$\pm$3.7 & 35.4 & 25.6 & 35.4 & 30.5 & 25.1 & 24.1 & 31.2 \\
\!\!\!random & hopper & 1.6 & 17.9$\pm$1.4 & 11.7 & \textbf{53.6} & 10.8 & 11.3 & 11.4 & 11.0 & 12.2 \\
\!\!\!random & walker2d & 9.8 & 7.0$\pm$3.6 & 13.6 & \textbf{37.3}  & 7.0 & 4.1 & 7.3 & -0.2 & 1.9 \\
\!\!\!medium & halfcheetah & 36.1 & \textbf{54.2}$\pm$1.5 & 42.3 & 42.1  & 44.4 & -4.3 & 41.7 & 43.8 & 46.3 \\
\!\!\!medium & hopper & 29.0 & \textbf{97.2}$\pm$2.2 & 28.0 & 95.4  & 86.6 & 0.8 & 52.1 & 32.7 & 31.1 \\
\!\!\!medium & walker2d & 6.6 & \textbf{81.9}$\pm$2.8 & 17.8 & 77.8  & 74.5 & 0.9 & 59.1 & 77.5 & 81.1 \\
\!\!\!medium-replay & halfcheetah & 38.4  & \textbf{55.1}$\pm$1.0 & 53.1 & 40.2 & 46.2 & -2.4 & 38.6 & 45.4 & 47.7 \\
\!\!\!medium-replay & hopper & 11.8 & 89.5$\pm$1.8 & 67.5 & \textbf{93.6}  & 48.6 & 3.5 & 33.7 & 0.6 & 0.6 \\
\!\!\!medium-replay & walker2d & 11.3 & \textbf{56.0}$\pm$8.6 & 39.0 & 49.8  & 32.6 & 1.9 & 19.2 & -0.3 & 0.9 \\
\!\!\!med-expert\!\!\! & halfcheetah & 35.8 & \textbf{90.0}$\pm$5.6 & 63.3 & 53.3  & 62.4 & 1.8 & 53.4 & 44.2 & 41.9 \\
\!\!\!med-expert\!\!\! & hopper & 111.9  & \textbf{111.1}$\pm$2.9 & 23.7 & 108.7  & 111.0 & 1.6 & 96.3 & 1.9 & 0.8 \\
\!\!\!med-expert\!\!\! & walker2d & 6.4 & \textbf{103.3}$\pm$5.6 & 44.6 & 95.6  & 98.7 & -0.1 & 40.1 & 76.9 & 81.6\\
\bottomrule
\end{tabular}}
\caption{\footnotesize Results for D4RL datasets. %
Each number is the normalized score proposed in \cite{fu2020d4rl} of the policy at the last iteration of training, averaged over 6 random seeds. We take results of MOPO, \neurips{MOReL} and CQL from their original papers and results of other model-free methods from \citep{fu2020d4rl}.
We include the performance of behavior cloning (\textbf{BC}) for comparison. We include the 95\%-confidence interval for COMBO. We bold the highest score across all methods.
}
\label{tbl:d4rl}
\normalsize
\vspace{-0.7cm}
\end{table*}

\section{Related Work}
\label{sec:related}

Offline RL~\citep{ernst2005tree, riedmiller2005neural, LangeGR12, levine2020offline} is the task of learning policies from a static dataset of past interactions with the environment. It has found applications in domains including robotic manipulation~\citep{ kalashnikov2018scalable, mandlekar2020iris, Rafailov2020LOMPO,singh2020cog}, NLP~\citep{jaques2019way,jaques2020human} and healthcare~\citep{shortreed2011informing, Wang2018SupervisedRL}. Similar to interactive RL, both model-free and model-based algorithms have been studied for offline RL, with explicit or implicit regularization of the learning algorithm playing a major role.

\textbf{Model-free offline RL.} Prior model-free offline RL algorithms have been designed to regularize the learned policy to be ``close`` to the behavioral policy either implicitly via regularized variants of importance sampling based algorithms~\citep{precup2001off, sutton2016emphatic, LiuSAB19, SwaminathanJ15, nachum2019algaedice}, offline actor-critic methods~\citep{siegel2020keep, peng2019advantage,kostrikov2021offline,ghasemipour2021emaq,wu2021uncertainty}, applying uncertainty quantification to the predictions of the Q-values~\citep{agarwal2020optimistic, kumar2019stabilizing, wu2019behavior, levine2020offline}, and learning conservative Q-values~\citep{kumar2020conservative,sinha2021s4rl} or explicitly measured by direct state or action constraints~\citep{fujimoto2018off,liu2020provably},
KL divergence~\citep{jaques2019way,wu2019behavior, zhou2020plas}, Wasserstein distance, MMD~\citep{kumar2019stabilizing} and auxiliary imitation loss~\citep{fujimoto2021minimalist}. 
Different from these works, COMBO uses both the offline dataset as well as model-generated data. 

\textbf{Model-based offline RL.} Model-based offline RL methods~\citep{finn2017deep, ebert2018visual, kahn2018composable, kidambi2020morel, yu2020mopo, matsushima2020deployment, argenson2020model, swazinna2020overcoming,Rafailov2020LOMPO, lee2021representation,zhan2021model} provide an alternative approach to policy learning that involves the learning of a dynamics model using techniques from supervised learning and generative modeling. Such methods however rely either on uncertainty quantification of the learned dynamics model which can be difficult for deep network models~\citep{ovadia2019can}, or on directly constraining the policy towards the behavioral policy similar to model-free algorithms~\citep{matsushima2020deployment}. In contrast, COMBO conservatively estimates the value function by penalizing it in out-of-support states generated through model rollouts. This allows COMBO to retain all benefits of model-based algorithms such as broad generalization, without the constraints of explicit policy regularization or uncertainty quantification.

\section{Conclusion}
\label{sec:concl}
In the paper, we present conservative offline model-based policy optimization (COMBO), a model-based offline RL algorithm that penalizes the Q-values evaluated on out-of-support state-action pairs. In particular, COMBO removes the need of uncertainty quantification as widely used in previous model-based offline RL works~\citep{kidambi2020morel,yu2020mopo}, which can be challenging and unreliable with deep neural networks~\citep{ovadia2019can}. Theoretically, we show that COMBO achieves \neurips{less conservative Q values} compared to prior model-free offline RL methods~\citep{kumar2020conservative} and guarantees a safe policy improvement. In our empirical study,  \neurips{COMBO achieves the best generalization performances in 3 tasks that require adaptation to unseen behaviors. Moreover, COMBO is able scale to vision-based tasks and outperforms or obtain comparable results in vision-based locomotion and robotic manipulation tasks. Finlly, on standard D4RL benchmark, COMBO generally performs well across dataset types compared to prior methods}
\iclr{Despite the advantages of COMBO, there are few challenges left such as the lack of an offline hyperparameter selection scheme that can yield a uniform hyperparameter across different datasets
and an automatically selected $f$ conditioned on the model error.} We leave them for future work.

\begin{ack}
We thank members of RAIL and IRIS for their support and feedback. This work was supported in part by ONR grants N00014-20-1-2675 and N00014-21-1-2685 as well as Intel Corporation. AK and SL are supported by the DARPA Assured Autonomy program. AR was supported by the J.P.~Morgan PhD Fellowship in AI.
\end{ack}

\bibliography{reference}
\bibliographystyle{plainnat}

\newpage
\appendix
\section{Proofs from Section~\ref{sec:theory}}
\label{app:proofs}

In this section, we provide proofs for theoretical results in Section~\ref{sec:theory}. Before the proofs, we note that all statements are proven in the case of finite state space (i.e., $|\states| < \infty$) and finite action space (i.e., $|\actions| < \infty$) we define some commonly appearing notation symbols appearing in the proof: 
\begin{itemize}
\vspace{-5pt}
    \item $P_{\mdp}$ and $r_{\mdp}$ (or $P$ and $r$ with no subscript for notational simplicity) denote the dynamics and reward function of the actual MDP $\mdp$
    \vspace{-5pt}
    \item $P_{\mdpbar}$ and $r_{\mdpbar}$ denote the dynamics and reward of the empirical MDP $\mdpbar$ generated from the transitions in the dataset
    \vspace{-5pt}
    \item $P_{\mdphat}$ and $r_{\mdphat}$ denote the dynamics and reward of the MDP induced by the learned model $\mdphat$
\end{itemize}
\vspace{-5pt}
We also assume that whenever the cardinality of a particular state or state-action pair in the offline dataset $\data$, denoted by $|\mathcal{D}(\bs, \ba)|$, appears in the denominator, we assume it is non-zero. For any non-existent $(\bs, \ba) \notin \data$, we can simply set $|\data(\bs, \ba)|$ to be a small value $< 1$, which prevents any bound from producing trivially $\infty$ values.

\subsection{A Useful Lemma and Its Proof}
\label{app:proof_lemma}

Before proving our main results, we first show that the penalty
term in equation \ref{eqn:combo_iterate} is positive in expectation. Such a positive penalty is important to combat any overestimation that may
arise as a result of using $\bellmanhat$.

\begin{lemma}[(Interpolation Lemma]
\label{thm:line_thm}
For any $f \in [0, 1]$, and any given $\rho(\bs, \ba) \in \Delta^{|\states||\actions|}$, let $d_f$ be an f-interpolation of $\rho$ and $\data$, i.e., $d_f(\bs, \ba) := f d(\bs, \ba) + (1-f) \rho(\bs, \ba)$. For a given iteration $k$ of Equation~\ref{eqn:combo_iterate}, we restate the definition of the expected penalty under $\rho(\bs, \ba)$ in Eq.~\ref{eqn:expected_penalty}: 
\begin{equation*}
 \nu(\rho, f) := \E_{\bs, \ba \sim \rho(\bs, \ba)}\left[\frac{\rho(\bs, \ba) - d(\bs, \ba)}{d_f(\bs, \ba)} \right].
\end{equation*}
Then $\nu(\rho, f)$ satisfies, (1) $\nu(\rho, f) \geq 0,~~ \forall \rho, f$, (2) $\nu(\rho, f)$ is monotonically increasing in $f$ for a fixed $\rho$, and (3) $\nu(\rho, f) = 0$ iff $\forall~ \bs, \ba, ~\rho(\bs, \ba) = d(\bs, \ba) \text{~or~} f = 0$. 
\end{lemma}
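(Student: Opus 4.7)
My plan is to handle parts (1) and (3) simultaneously via a single clean algebraic reformulation that rewrites $\nu(\rho,f)$ as a (rescaled) $\chi^2$-divergence between $\rho$ and $d_f$, and to handle part (2) by directly differentiating $\nu$ in $f$ and observing that the derivative is a sum of non-negative terms. Throughout I will work with finite $\mathcal{S}\times\mathcal{A}$ so that all sums are well-defined, and I will implicitly assume $d_f(\bs,\ba)>0$ wherever $\rho(\bs,\ba)>0$ (which holds automatically for $f<1$, and at $f=1$ requires $\rho \ll d$).

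\paragraph{Key identity for parts (1) and (3).}
First I would observe the algebraic identity
\[
  \rho - d_f \;=\; \rho - \bigl(f d + (1-f)\rho\bigr) \;=\; f(\rho - d),
\]
so for $f>0$ we can substitute $\rho - d = (\rho - d_f)/f$ into the definition of $\nu$:
\[
  \nu(\rho,f) \;=\; \sum_{\bs,\ba} \frac{\rho(\rho - d)}{d_f}
  \;=\; \frac{1}{f}\sum_{\bs,\ba}\frac{\rho(\rho - d_f)}{d_f}
  \;=\; \frac{1}{f}\left(\sum_{\bs,\ba}\frac{\rho^2}{d_f} - 1\right),
\]
where I used $\sum \rho = 1$. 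The parenthesized quantity is exactly the $\chi^2$-divergence $\chi^2(\rho\,\Vert\,d_f)$, and by Cauchy--Schwarz
$\sum \rho^2/d_f \cdot \sum d_f \ge (\sum \rho)^2 = 1$
with $\sum d_f = 1$, we get $\chi^2(\rho\,\Vert\,d_f) \ge 0$, hence $\nu(\rho,f)\ge 0$ for every $f>0$. For the boundary case $f=0$, direct substitution gives $d_0 = \rho$ and $\nu(\rho,0) = \sum (\rho - d) = 0$. This establishes (1). For (3), the Cauchy--Schwarz equality condition yields $\chi^2(\rho\,\Vert\,d_f)=0 \iff \rho = d_f$, and combining with $\rho - d_f = f(\rho-d)$ this is equivalent to $f=0$ or $\rho = d$ (where the latter must be checked as an equality of distributions, using that both sum to one).

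\paragraph{Monotonicity for part (2).}
Here I would just differentiate term-wise. Writing $d_f = \rho - f(\rho - d)$, for each $(\bs,\ba)$
\[
  \frac{\partial}{\partial f}\frac{\rho(\rho - d)}{d_f}
  \;=\; \rho(\rho - d)\cdot \frac{\rho - d}{d_f^2}
  \;=\; \frac{\rho\,(\rho - d)^2}{d_f^2} \;\ge\; 0.
\]
Summing over $(\bs,\ba)$ gives $\partial_f \nu(\rho,f) = \sum \rho(\rho-d)^2/d_f^2 \ge 0$, proving monotonicity, with strict monotonicity unless $\rho(\bs,\ba)(\rho(\bs,\ba) - d(\bs,\ba))^2 = 0$ everywhere. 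An alternative consistency check: integrating this derivative from $0$ to $f$ and using $\nu(\rho,0)=0$ recovers $\nu(\rho,f)\ge 0$, giving an independent derivation of (1).

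\paragraph{Main obstacle.}
The only subtle point is in part (3), where I must verify that ``$\rho = d_f$'' (obtained from the $\chi^2=0$ condition) truly forces $\rho = d$ rather than just equality on some subset. This is where the probability-simplex constraints enter: from $\rho(\bs,\ba) = f d(\bs,\ba) + (1-f)\rho(\bs,\ba)$ one gets $f(\rho(\bs,\ba) - d(\bs,\ba)) = 0$ pointwise, so if $f>0$ then $\rho = d$ everywhere. Similarly, if one prefers the derivative route, the condition $\rho(\rho-d)^2\equiv 0$ means $\rho$ and $d$ agree on $\mathrm{supp}(\rho)$, and since both are probability measures summing to $1$ this forces $\mathrm{supp}(d)\subseteq\mathrm{supp}(\rho)$ and hence $\rho = d$. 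Apart from this bookkeeping, the proof is essentially a one-line $\chi^2$-divergence argument.
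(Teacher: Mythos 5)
Your proof is correct, and it partly departs from the paper's argument in an interesting way. For part (2) you do exactly what the paper does: differentiate term-wise in $f$ and observe that $\partial_f \nu(\rho,f)=\sum_{\bs,\ba}\rho(\rho-d)^2/d_f^2\ge 0$. Where you differ is in parts (1) and (3): the paper derives non-negativity \emph{from} monotonicity, noting $\nu(\rho,f)\ge\nu(\rho,0)=\sum_{\bs,\ba}(\rho-d)=0$ for $f>0$, and then gets the "only if" of (3) by arguing the derivative is strictly positive when $f\neq 0$ and $\rho\neq d$; you instead use the identity $\rho-d_f=f(\rho-d)$ to rewrite $\nu(\rho,f)=\tfrac{1}{f}\chi^2(\rho\,\Vert\,d_f)$ for $f>0$, getting non-negativity from Cauchy--Schwarz and the equality case from its equality condition plus the pointwise relation $f(\rho-d)=\rho-d_f$. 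Your route buys a closed-form interpretation of the penalty as a rescaled $\chi^2$-divergence between $\rho$ and $d_f$ (which is more informative than a bare sign statement and makes (1) and (3) immediate, independent of (2)), and your handling of the support/bookkeeping issues in the equality case is more explicit than the paper's; the paper's route is slightly more self-contained in that a single derivative computation yields all three claims, which is also how it motivates the monotonicity statement it later uses in Remark~\ref{remark:remark1}. The implicit assumption you flag (that $d_f>0$ wherever $\rho>0$, automatic for $f<1$) is likewise implicit in the paper, so it is not a gap relative to the original argument.
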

\begin{proof}
To prove this lemma, we use algebraic manipulation on the expression for quantity $\nu(\rho, f)$ and show that it is indeed positive and monotonically increasing in $f \in [0, 1]$.
\begin{align}
    \nu(\rho, f) &= \sum_{\bs, \ba} \rho(\bs, \ba) \left(\frac{\rho(\bs, \ba) - d(\bs, \ba)}{f d(\bs, \ba) + (1 - f) \rho(\bs, \ba)}\right)\nonumber \\
    &= \sum_{\bs, \ba} \rho(\bs, \ba) \left(\frac{\rho(\bs, \ba) - d(\bs, \ba)}{\rho(\bs, \ba) + f ( d(\bs, \ba) - \rho(\bs, \ba))}\right)\\
    \implies \frac{d \nu(\rho, f)}{d f} &= \sum_{\bs, \ba} \rho(\bs, \ba) \left(\rho(\bs, \ba) - d(\bs, \ba)\right)^2 \cdot \left(\frac{1}{(\rho(\bs, \ba) + f ( d(\bs, \ba) - \rho(\bs, \ba))}\right)^2 \geq 0\nonumber\\
    &~~~\forall f \in [0, 1].
\end{align}
Since the derivative of $\nu(\rho, f)$ with respect to $f$ is always positive, it is an increasing function of $f$ for a fixed $\rho$, and this proves the second part (2) of the Lemma. Using this property, we can show the part (1) of the Lemma as follows:
\begin{align}
    \forall f \in (0, 1],~ \nu(\rho, f) \geq \nu(\rho, 0) = \sum_{\bs, \ba} \rho(\bs, \ba) \frac{\rho(\bs, \ba) - d(\bs, \ba)}{\rho(\bs, \ba)} &= \sum_{\bs, \ba} \left( \rho(\bs, \ba) - d(\bs, \ba) \right)\nonumber\\
    &= 1 - 1 = 0.
\end{align}
Finally, to prove the third part (3) of this Lemma, note that when $f = 0$, $\nu(\rho, f) = 0$ (as shown above), and similarly by setting $\rho(\bs, \ba) = d(\bs, \ba)$ note that we obtain $\nu(\rho, f) = 0$. To prove the only if side of (3), assume that $f \neq 0$ and $\rho(\bs, \ba) \neq d(\bs, \ba)$ and we will show that in this case $\nu(\rho,f) \neq 0$. When $d(\bs, \ba) \neq \rho(\bs, \ba)$, the derivative $\frac{d \nu(\rho,f)}{d f} > 0$ (i.e., strictly positive) and hence the function $\nu(\rho, f)$ is a strictly increasing function of $f$. Thus, in this case, $\nu(\rho, f) > 0 = \nu(\rho, 0)~ \forall f > 0$. Thus we have shown that if $\rho(\bs, \ba) \neq d(\bs, \ba)$ and $f > 0$, $\nu(\rho, f) \neq 0$, which completes our proof for the only if side of (3). 
\end{proof}

\subsection{Proof of Proposition~\ref{thm:lower_bound}}
\label{app:proof_lower_bound}
Before proving this proposition, we provide a bound on the Bellman backup in the empirical MDP, $\bellman_{\mdpbar}$. To do so, we formally define the standard concentration properties of the reward and transition dynamics in the empirical MDP, $\mdpbar$, that we assume so as to prove Proposition~\ref{thm:line_thm}. Following prior work~\citep{osband2017posterior,jaksch2010near,kumar2020conservative}, we assume:
\begin{assumption}
\label{assumption:conc}
    $\forall~ \bs, \ba \in \mdp$, the following relationships hold with high probability, $\geq 1 - \delta$
    \begin{equation*}
        |r_{\mdpbar}(\bs, \ba) - r(\bs, \ba)| \leq \frac{C_{r, \delta}}{\sqrt{|\mathcal{D}(\bs, \ba)|}}, ~~~ ||P_{\mdpbar}(\bs'|\bs, \ba) - P(\bs'|\bs, \ba)||_{1} \leq \frac{C_{P, \delta}}{\sqrt{|\mathcal{D}(\bs, \ba)|}}.
    \end{equation*}
\end{assumption}
Under this assumption and assuming that the reward function in the MDP, $r(\bs, \ba)$ is bounded, as $|r(\bs, \ba)| \leq R_{\max}$, we can bound the difference between the empirical Bellman operator, $\bellman_{\mdpbar}$ and the actual MDP, $\bellman_\mdp$,
\begin{align*}
    \left\vert\left({\bellman_{\mdpbar}}^\policy \hat{Q}^k \right) - \left({\bellman}^\policy_\mdp \hat{Q}^k \right)\right\vert &= \left\vert\left(r_{\mdpbar}(\bs, \ba) - r_\mdp(\bs, \ba)\right)\right.\\
    &\left.+ \gamma \sum_{\bs'} \left({P}_{\mdpbar}(\bs'|\bs, \ba) - P_\mdp(\bs'|\bs,\ba)\right) \E_{\policy(\ba'|\bs')}\left[\hat{Q}^k(\bs' , \ba')\right]\right\vert\\
    &\leq \left\vert r_{\mdpbar}(\bs, \ba) - r_\mdp(\bs, \ba)\right\vert\\
    &+ \gamma \left\vert \sum_{\bs'} \left({P}_{\mdpbar}(\bs'|\bs, \ba) - P_\mdp(\bs'|\bs,\ba)\right) \E_{\policy(\ba'|\bs')}\left[\hat{Q}^k(\bs' , \ba')\right]\right\vert\\
    &\leq \frac{C_{r, \delta} + \gamma C_{P, \delta} 2R_{\max} / (1 - \gamma)}{\sqrt{|\mathcal{D}(\bs, \ba)|}}. 
\end{align*}
Thus the overestimation due to sampling error in the empirical MDP, $\mdpbar$ is bounded as a function of a bigger constant, $C_{r, P, \delta}$ that can be expressed as a function of $C_{r, \delta}$ and $C_{P, \delta}$, and depends on $\delta$ via a $\sqrt{\log (1/\delta)}$ dependency. For the purposes of proving Proposition~\ref{thm:Q_bound}, we assume that:
\begin{equation}
\label{eqn:sampling_error}
    \forall \bs, \ba, ~~\left\vert\left({\bellman_{\mdpbar}}^\policy \hat{Q}^k \right) - \left({\bellman}^\policy_\mdp \hat{Q}^k \right)\right\vert  \leq \frac{C_{r, T, \delta} R_{\max}}{(1 - \gamma) \sqrt{|\mathcal{D}(\bs, \ba)|}}.
\end{equation}

Next, we provide a bound on the error between the bellman backup induced by the learned dynamics model and the learned reward, $\bellman_{\mdphat}$, and the actual Bellman backup, $\bellman_{\mdp}$. To do so, we note that:
\begin{align}
    \left\vert\left({\bellman_{\mdphat}}^\policy \hat{Q}^k \right) - \left({\bellman}^\policy_\mdp \hat{Q}^k \right)\right\vert &= \left\vert\left(r_{\mdphat}(\bs, \ba) - r_\mdp(\bs, \ba)\right)\right.\\
    &\left.+ \gamma \sum_{\bs'} \left({P}_{\mdphat}(\bs'|\bs, \ba) - P_\mdp(\bs'|\bs,\ba)\right) \E_{\policy(\ba'|\bs')}\left[\hat{Q}^k(\bs' , \ba')\right]\right\vert \nonumber\\ 
    &\leq |r_{\mdphat}(\bs, \ba) - r_\mdp(\bs, \ba)| + \gamma \frac{2 R_{\max}}{1 - \gamma} D(P, P_{\mdphat}),
    \label{eqn:model_error} 
\end{align}
where $D(P, P_{\mdphat})$ is the total-variation divergence between the learned dynamics model and the actual MDP. Now, we show that the asymptotic Q-function learned by COMBO lower-bounds the actual Q-function of any
policy $\pi$ with high probability for a large enough $\beta \geq 0$. We will use Equations~\ref{eqn:sampling_error} and \ref{eqn:model_error} to prove such a result.

\begin{proposition}[Asymptotic lower-bound]
\label{thm:Q_bound}
Let $P^\pi$ denote the Hadamard product of the dynamics $P$ and a given policy $\pi$ in the actual MDP and let $S^\pi := (I - \gamma P^\pi)^{-1}$. Let $D$ denote the total-variation divergence between two probability distributions. For any $\pi(\ba|\bs)$, the Q-function obtained by recursively applying Equation~\ref{eqn:combo_iterate}, with $\hat{{\bellman}}^\pi = f \bellman_{\mdpbar}^\pi + (1 - f) \bellman_{\mdphat}^\pi$, with probability at least $1 - \delta$, results in $\hat{Q}^\pi$ that satisfies:
\begin{align*}
    \forall \bs, \ba,~ \hat{Q}^\pi(\bs, \ba) \leq  Q^\pi(\bs, \ba) &- \beta \cdot \left[ S^\pi \left[ \frac{\rho - d}{d_f} \right] \right](\bs, \ba) + f \left[ S^\pi \left[ \frac{C_{r, T, \delta} R_{\max}}{(1 - \gamma) \sqrt{|\data|}} \right] \right](\bs, \ba)\\
    +&~ (1 - f) \left[ S^\pi \left[ |r - r_{\mdphat}| + \frac{ 2 \gamma  R_{\max}}{1 - \gamma} D(P, P_{\mdphat}) \right]  \right]\!\! (\bs, \ba).
\end{align*}
\end{proposition}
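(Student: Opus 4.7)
}
The plan is to analyze the fixed point of the COMBO recursion and then pull the difference back through the resolvent operator $S^\pi = (I - \gamma P^\pi)^{-1}$. Starting from Equation~\ref{eqn:combo_iterate} with the interpolated Bellman operator $\hat{\bellman}^\pi = f\,\bellman^\pi_{\mdpbar} + (1-f)\,\bellman^\pi_{\mdphat}$, the recursion is a $\gamma$-contraction shifted by the (bounded, state-action dependent) penalty $\beta\,(\rho-d)/d_f$. Standard contraction arguments therefore yield a unique fixed point $\hat{Q}^\pi$ satisfying
\begin{equation*}
    \hat{Q}^\pi(\bs,\ba) \;=\; \bigl(\hat{\bellman}^\pi \hat{Q}^\pi\bigr)(\bs,\ba) \;-\; \beta \,\frac{\rho(\bs,\ba) - d(\bs,\ba)}{d_f(\bs,\ba)}.
\end{equation*}
This is the starting point of the rest of the argument.

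The next step is to compare $\hat{\bellman}^\pi$ with the true Bellman operator $\bellman^\pi_{\mdp}$ componentwise. Splitting $\hat{\bellman}^\pi \hat{Q}^\pi - \bellman^\pi_{\mdp} \hat{Q}^\pi$ as $f(\bellman^\pi_{\mdpbar} - \bellman^\pi_{\mdp})\hat{Q}^\pi + (1-f)(\bellman^\pi_{\mdphat} - \bellman^\pi_{\mdp})\hat{Q}^\pi$, I would plug in the two pointwise bounds already derived in this appendix: the sampling-error bound~\eqref{eqn:sampling_error}, which controls the empirical term by $C_{r,T,\delta} R_{\max}/((1-\gamma)\sqrt{|\data(\bs,\ba)|})$ with probability at least $1-\delta$ under Assumption~\ref{assumption:conc}, and the model-error bound~\eqref{eqn:model_error}, which controls the model term by $|r-r_{\mdphat}| + (2\gamma R_{\max}/(1-\gamma))\,D(P,P_{\mdphat})$. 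Combining these gives the pointwise inequality
\begin{equation*}
    \hat{Q}^\pi \;\leq\; \bellman^\pi_{\mdp}\hat{Q}^\pi \;-\; \beta\,\frac{\rho - d}{d_f} \;+\; f\,\frac{C_{r,T,\delta} R_{\max}}{(1-\gamma)\sqrt{|\data|}} \;+\; (1-f)\!\left[\,|r - r_{\mdphat}| + \frac{2\gamma R_{\max}}{1-\gamma}\,D(P,P_{\mdphat})\,\right].
\end{equation*}

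The final step is to invert $I - \gamma P^\pi$. Writing $\bellman^\pi_{\mdp}\hat{Q}^\pi = r + \gamma P^\pi \hat{Q}^\pi$ and moving the $\gamma P^\pi \hat{Q}^\pi$ term to the left gives $(I - \gamma P^\pi)\hat{Q}^\pi \leq r + (\text{penalty terms})$. Since $P^\pi$ is a stochastic matrix, $S^\pi = (I - \gamma P^\pi)^{-1} = \sum_{t\geq 0}\gamma^t (P^\pi)^t$ is an entrywise nonnegative operator, so applying $S^\pi$ preserves the pointwise inequality. Using $Q^\pi = S^\pi r$ yields the claimed bound. The main subtlety I would be careful about is exactly this monotonicity/ordering step: the sampling-error and model-error bounds are two-sided $|\cdot|$ statements, but applying $S^\pi$ requires a one-sided inequality, so the argument must use only the upper bound on $\hat{\bellman}^\pi \hat{Q}^\pi - \bellman^\pi_{\mdp} \hat{Q}^\pi$ and rely on the nonnegativity of $S^\pi$ to conclude. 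A secondary subtlety is that the $1-\delta$ probability event must be taken as the intersection of the concentration events over all $(\bs,\ba)$ pairs in the support of $d$, which is handled by folding the required union bound into the constant $C_{r,T,\delta}$ as is standard.
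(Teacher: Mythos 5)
Your proposal is correct and takes essentially the same route as the paper's proof: the same decomposition of $\hat{\bellman}^\pi - \bellman^\pi_{\mdp}$ into the $f$-weighted empirical backup error and the $(1-f)$-weighted model backup error, bounded pointwise by the sampling-error inequality (Eq.~\ref{eqn:sampling_error}) and the model-error inequality (Eq.~\ref{eqn:model_error}), followed by passing the one-sided bound through $S^\pi$. The only cosmetic difference is that you justify the final step by rearranging at the fixed point and applying the entrywise-nonnegative resolvent $(I-\gamma P^\pi)^{-1}$, whereas the paper compares the fixed points of the two monotone recursions; both rest on the same monotonicity fact, and your version makes it slightly more explicit.
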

\begin{proof}
We first note that the Bellman backup $\hat{\bellman}^\pi$ induces the following Q-function iterates as per Equation~\ref{eqn:combo_iterate},
\begin{align*}
    \hat{Q}^{k+1}(\bs, \ba) &= \left(\hat{\bellman}^\pi \hat{Q}^k\right)(\bs, \ba) - \beta \frac{\rho(\bs, \ba) - d(\bs, \ba)}{d_f(\bs, \ba)}\\
    &=  f \left(\bellman^\pi_{\mdpbar} \hat{Q}^k \right) (\bs, \ba) + (1 - f) \left(\bellman^\pi_{\mdphat} \hat{Q}^k \right) (\bs, \ba) - \beta \frac{\rho(\bs, \ba) - d(\bs, \ba)}{d_f(\bs, \ba)}\\
    &= \left(\bellman^\pi \hat{Q}^k\right)(\bs, \ba) - \beta \frac{\rho(\bs, \ba) - d(\bs, \ba)}{d_f(\bs, \ba)} + (1 - f) \left({\bellman_{\mdphat}}^\policy \hat{Q}^k - {\bellman}^\policy \hat{Q}^k \right)(\bs, \ba)\\
    &+ f  \left({\bellman_{\mdpbar}}^\policy \hat{Q}^k - {\bellman}^\policy \hat{Q}^k \right)(\bs, \ba)\\
   \forall \bs, \ba,~ \hat{Q}^{k+1} &\leq \left(\bellman^\pi \hat{Q}^k\right) - \beta \frac{\rho - d}{d_f} + (1 - f) \left[|r_{\mdphat} - r_\mdp| + \frac{2 \gamma R_{\max}}{1 - \gamma} D(P, P_{\mdphat}) \right] + f \frac{C_{r, T, \delta} R_{\max}}{(1 - \gamma) \sqrt{|\data|}} 
\end{align*}
Since the RHS upper bounds the Q-function pointwise for each $(\bs, \ba)$, the fixed point of the Bellman iteration process will be pointwise smaller than the fixed point of the Q-function found by solving for the RHS via equality. Thus, we get that
\begin{align*}
    \hat{Q}^\pi(\bs, \ba) &\leq \underbrace{ S^\pi r_{\mdp}}_{= Q^\pi(\bs, \ba)} -\beta \left[ S^\pi \left[ \frac{\rho - d}{d_f} \right] \right](\bs, \ba) +~ f \left[ S^\pi \left[ \frac{C_{r, T, \delta} R_{\max}}{(1 - \gamma) \sqrt{|\data|}} \right] \right](\bs, \ba)\\
    &+~ (1 - f) \left[ S^\pi \left[ |r - r_{\mdphat}| + \frac{ 2 \gamma  R_{\max}}{1 - \gamma} D(P, P_{\mdphat}) \right]  \right]\!\! (\bs, \ba),  
\end{align*}
which completes the proof of this proposition.
\end{proof}

Next, we use the result and proof technique from Proposition~\ref{thm:Q_bound} to prove Corollary~\ref{thm:lower_bound}, that in expectation under the initial state-distribution, the expected Q-value is indeed a lower-bound. 

\begin{corollary}[Corollary~\ref{thm:lower_bound} restated]
For a sufficiently large $\beta$, we have a lower-bound that
$\E_{\bs \sim \mu_0, \ba \sim \policy(\cdot|\bs)}[\hat{Q}^\pi(\bs, \ba)] \leq \E_{\bs \sim \mu_0, \ba \sim \policy(\cdot|\bs)}[Q^\pi(\bs, \ba)]$, 
where $\mu_0(\bs)$ is the initial state distribution. 
Furthermore, when $\epsilon_{\text{s}}$ is small, such as in the large sample regime; or when the model bias $\epsilon_{\text{m}}$ is small, a small $\beta$ is sufficient along with an appropriate choice of $f$.
\end{corollary}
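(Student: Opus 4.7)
}

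The plan is to obtain the expected-return bound as a direct consequence of Proposition~\ref{thm:Q_bound} combined with the non-negativity of the penalty established in Lemma~\ref{thm:line_thm}. First, I would take the expectation of the pointwise inequality in Proposition~\ref{thm:Q_bound} under $\bs \sim \mu_0$ and $\ba \sim \pi(\cdot|\bs)$. Since $S^\pi = (I - \gamma P^\pi)^{-1}$ is exactly the operator that converts a per-step quantity into its (unnormalized) discounted cumulative value along rollouts of $\pi$ in the real MDP $\mdp$, the identity
\begin{equation*}
\E_{\bs \sim \mu_0, \ba \sim \pi(\cdot|\bs)}[(S^\pi g)(\bs,\ba)] \;=\; \frac{1}{1-\gamma}\, \E_{(\bs,\ba)\sim d_{\mdp}^\pi}[g(\bs,\ba)]
\end{equation*}
converts each of the three correction terms into a clean expectation under $d_{\mdp}^\pi(\bs)\pi(\ba|\bs)$.

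Next, I would split the resulting inequality into its three natural pieces: (i) the penalty term $-\tfrac{\beta}{1-\gamma}\E_{d_{\mdp}^\pi\pi}[(\rho-d)/d_f]$, (ii) the sampling error $\tfrac{f}{1-\gamma}\E_{d_{\mdp}^\pi\pi}[C_{r,T,\delta}R_{\max}/((1-\gamma)\sqrt{|\data|})]$, which I would abbreviate as $\epsilon_s$, and (iii) the model bias term $\tfrac{1-f}{1-\gamma}\E_{d_{\mdp}^\pi\pi}[|r-r_{\mdphat}| + 2\gamma R_{\max}D(P,P_{\mdphat})/(1-\gamma)]$, abbreviated $\epsilon_m$. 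The claim then reduces to showing that the penalty term dominates the sum of (ii) and (iii). Applying Lemma~\ref{thm:line_thm} with the COMBO choice $\rho = d^\pi_{\mdphat}(\bs)\pi(\ba|\bs)$ yields $\E_\rho[(\rho-d)/d_f] = \nu(\rho,f) \geq 0$, with strict positivity whenever $f>0$ and $\rho \neq d$; this is the crucial ingredient that lets a large enough $\beta$ wipe out the error terms.

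The main obstacle I expect is the fact that Proposition~\ref{thm:Q_bound} supplies $\E_{d_{\mdp}^\pi\pi}[(\rho-d)/d_f]$ (state visitation under the \emph{true} MDP) whereas Lemma~\ref{thm:line_thm} only guarantees non-negativity under $\rho$ itself. I would handle this by writing $\E_{d_{\mdp}^\pi\pi}[\cdot] = \E_\rho[\cdot] + \bigl(\E_{d_{\mdp}^\pi\pi}[\cdot] - \E_\rho[\cdot]\bigr)$; the first piece is $\nu(\rho,f)\geq 0$ by Lemma~\ref{thm:line_thm}, and the second piece is bounded by the total-variation distance $D_{\mathrm{TV}}(d_{\mdp}^\pi, d_{\mdphat}^\pi)$ times the (bounded) density ratio, which is itself controlled by the model error $\epsilon_m$ via a standard simulation-lemma-type argument. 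Consequently, choosing $\beta$ large enough to make $\beta \, \nu(\rho,f)/(1-\gamma)$ exceed $(f\epsilon_s + (1-f)\epsilon_m)/(1-\gamma)$ plus the distribution-shift correction yields the desired lower bound.

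Finally, for the ``furthermore'' clause I would simply inspect the same inequality quantitatively: the sampling term scales like $f/\sqrt{|\data|}$ and the model term like $(1-f)\epsilon_m$, so whenever $\epsilon_s$ is small (large-sample regime) one can push $f \to 1$ and take $\beta$ small, whereas when $\epsilon_m$ is small one can push $f \to 0$ and again a small $\beta$ suffices, as long as $\rho \neq d$ so that Lemma~\ref{thm:line_thm}(3) delivers a strictly positive $\nu(\rho,f)$. This gives the claimed trade-off between $\beta$, $f$, $\epsilon_s$, and $\epsilon_m$.
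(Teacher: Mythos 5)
There is a genuine gap at the final step of your plan, and it is precisely the point where the paper's proof departs from a direct expectation of Proposition~\ref{thm:Q_bound}. When you take $\E_{\bs\sim\mu_0,\ba\sim\pi}$ of the pointwise bound, the penalty you obtain is $-\tfrac{\beta}{1-\gamma}\E_{d^\pi_{\mdp}\pi}\bigl[(\rho-d)/d_f\bigr]$, and your decomposition $\E_{d^\pi_{\mdp}\pi}[\cdot]=\nu(\rho,f)+\bigl(\E_{d^\pi_{\mdp}\pi}[\cdot]-\E_{\rho}[\cdot]\bigr)$ leaves a distribution-shift remainder that is \emph{also multiplied by $\beta$}. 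Increasing $\beta$ therefore amplifies the remainder at exactly the same rate as it amplifies $\beta\,\nu(\rho,f)$, so "choosing $\beta$ large enough" only works if you additionally know $\nu(\rho,f) > c\,D_{\mathrm{TV}}(d^\pi_{\mdp},d^\pi_{\mdphat})$ with $c \approx 2\max\{1/f,1/(1-f)\}$ — a condition on the model error relative to $\nu$ that the corollary does not assume and that your simulation-lemma bound does not supply (when model bias is large and $\nu$ small, your inequality yields nothing for any $\beta$). Your phrasing treats the shift correction as if it were a $\beta$-independent error term on par with $\epsilon_s$ and $\epsilon_m$, which it is not.

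The paper avoids this by not using Proposition~\ref{thm:Q_bound} (whose resolvent is $S^\pi=(I-\gamma P^\pi)^{-1}$ for the \emph{true} MDP) as the starting point. Instead it rewrites the iterate $\hat{Q}^{k+1}=\bellman^\pi_{\mdphat}\hat{Q}^k-\beta\tfrac{\rho-d}{d_f}+f\Delta$ entirely in terms of the learned-model backup, bounds the discrepancy $\Delta$ by $\beta$-independent sampling and model-error terms, and iterates with $S^\pi_{\mdphat}=(I-\gamma P_{\mdphat})^{-1}$. Because $\rho$ is by construction the occupancy of $\pi$ in $\mdphat$, i.e.\ $\rho=(\mu_0\pi)^{T}S^\pi_{\mdphat}$, the expected penalty under $\mu_0,\pi$ lands exactly on $\E_{\rho}\bigl[(\rho-d)/d_f\bigr]=\nu(\rho,f)\ge 0$, while every other term (including the final replacement of $Q^\pi_{\mdphat}$ by $Q^\pi$) is independent of $\beta$; this is what makes "sufficiently large $\beta$" work unconditionally, and it is the missing idea in your argument. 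Your reading of the "furthermore" clause (small $\epsilon_s$ lets $f\to 1$, small $\epsilon_m$ lets $f\to 0$, so a small $\beta$ suffices) is fine once the main inequality is established in the paper's form.
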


\begin{proof}
To prove this corollary, we note a slightly different variant of Proposition~\ref{thm:Q_bound}. To observe this, we will deviate from the proof of Proposition~\ref{thm:Q_bound} slightly and will aim to express the inequality using $\bellman_{\mdphat}$, the Bellman operator defined by the learned model and the reward function. Denoting $(I - \gamma P_{\mdphat})^{-1}$ as $S_{\mdphat}^\pi$, doing this will intuitively allow us to obtain $\beta \left(\mu(\bs) \policy(\ba|\bs)\right)^T \left(S_{\mdphat}^\pi \left[\frac{\rho - d}{d_f} \right]\right)(\bs, \ba)$ as the conservative penalty which can be controlled by choosing $\beta$ appropriately so as to nullify the potential overestimation caused due to other terms. Formally,
\begin{align*}
    \hat{Q}^{k+1}(\bs, \ba) &= \left(\hat{\bellman}^\pi \hat{Q}^k\right)(\bs, \ba) - \beta \frac{\rho(\bs, \ba) - d(\bs, \ba)}{d_f(\bs, \ba)} = \left(\bellman^\pi_{\mdphat} \hat{Q}^k \right)(\bs, \ba) -  \beta \frac{\rho(\bs, \ba) - d(\bs, \ba)}{d_f(\bs, \ba)}\\
    &+ f \underbrace{\left(\bellman^\pi_{\mdpbar} - \bellman^\pi_{\mdphat} \hat{Q}^k \right)(\bs, \ba)}_{:= \Delta(\bs, \ba)}
\end{align*}
By controlling $\Delta(\bs, \ba)$ using the pointwise triangle inequality:
\begin{equation}
    \forall \bs, \ba, ~\left\vert \bellman^\pi_{\mdpbar} \hat{Q}^k - \bellman^\pi_{\mdphat} \hat{Q}^k \right\vert \leq \left\vert \bellman^\pi \hat{Q}^k - \bellman^\pi_{\mdphat} \hat{Q}^k \right\vert + \left\vert \bellman^\pi_{\mdpbar} \hat{Q}^k - \bellman^\pi \hat{Q}^k \right\vert,
\end{equation}
and then iterating the backup $\bellman^\pi_{\mdphat}$ to its fixed point and finally noting that $\rho(\bs, \ba) = \left((\mu \cdot \pi)^T S^\pi_{\mdphat}\right)(\bs, \ba)$, we obtain:
\begin{equation}
    \E_{\mu, \pi}[\hat{Q}^\pi(\bs, \ba)] \leq \E_{\mu, \pi}[Q^\pi_{\mdphat}(\bs, \ba)] - \beta~ \E_{\rho(\bs, \ba)}\left[\frac{\rho(\bs, \ba) - d(\bs, \ba)}{d_f(\bs, \ba)}\right] + \mathrm{terms~ independent~ of~} \beta.
\end{equation}
The terms marked as ``terms independent of $\beta$'' correspond to the additional positive error terms obtained by iterating $\left\vert \bellman^\pi \hat{Q}^k - \bellman^\pi_{\mdphat} \hat{Q}^k \right\vert$ and $\left\vert \bellman^\pi_{\mdpbar} \hat{Q}^k - \bellman^\pi \hat{Q}^k \right\vert$, which can be bounded similar to the proof of Proposition~\ref{thm:Q_bound} above. Now by replacing the model Q-function, $\E_{\mu, \pi}[Q^\pi_{\mdphat}(\bs, \ba)]$ with the actual Q-function, $\E_{\mu, \pi}[Q^\pi(\bs, \ba)]$ and adding an error term corresponding to model error to the bound, we obtain that:
\begin{equation}
\label{eqn:lower_bound_eqn}
    \E_{\mu, \pi}[\hat{Q}^\pi(\bs, \ba)] \leq \E_{\mu, \pi}[Q^\pi(\bs, \ba)] + \mathrm{terms~ independent~ of~} \beta - \beta~ \underbrace{\E_{\rho(\bs, \ba)}\left[\frac{\rho(\bs, \ba) - d(\bs, \ba)}{d_f(\bs, \ba)}\right]}_{= \nu(\rho, f) > 0}.
\end{equation}
Hence, by choosing $\beta$ large enough, we obtain the desired lower bound guarantee. 
\end{proof}

\begin{remark}[\underline{\textbf{COMBO does not underestimate at every $\bs \in \mathcal{D}$ unlike CQL.}}]
\label{remak:tighter_lower_bound}
Before concluding this section, we discuss how the bound obtained by COMBO (Equation~\ref{eqn:lower_bound_eqn}) is tighter than CQL. CQL learns a Q-function such that the value of the policy under the resulting Q-function lower-bounds the true value function at each state $\bs \in \mathcal{D}$ individually (in the absence of no sampling error), i.e., $\forall \bs \in \mathcal{D}, \hat{V}^\pi_{\text{CQL}}(\bs) \leq V^\pi(\bs)$, whereas the bound in COMBO is only valid in expectation of the value function over the initial state distribution, i.e., $\E_{\bs \sim \mu_0(\bs)}[\hat{V}^\pi_{\text{COMBO}}(\bs)] \leq \E_{\bs \sim \mu_0(\bs)}[V^\pi(\bs)]$, and the value function at a given state may not be a lower-bound. For instance, COMBO can overestimate the value of a state more frequent in the dataset distribution $d(\bs, \ba)$ but not so frequent in the $\rho(\bs, \ba)$ marginal distribution of the policy under the learned model $\mdphat$. To see this more formally, note that the expected penalty added in the effective Bellman backup performed by COMBO (Equation~\ref{eqn:combo_iterate}), in expectation under the dataset distribution $d(\bs, \ba)$, $\widetilde{\nu}(\rho, d, f)$ is actually \textbf{\textit{negative}}:
\begin{align*}
    \widetilde{\nu}(\rho, d, f) = \sum_{\bs, \ba} d(\bs, \ba) \frac{\rho(\bs, \ba) - d(\bs, \ba)}{d_f(\bs, \ba)} = - \sum_{\bs, \ba} d(\bs, \ba) \frac{d(\bs, \ba) - \rho(\bs, \ba)}{f d(\bs, \ba) + (1 - f) \rho(\bs, \ba)} < 0,
\end{align*}
where the final inequality follows via a direct application of the proof of Lemma~\ref{thm:line_thm}. Thus, COMBO actually \emph{overestimates} the values at atleast some states (in the dataset) unlike CQL.   
\end{remark}

\subsection{Proof of Proposition~\ref{prop:less_conservative}}
\label{app:proof_less_conservative}

In this section, we will provide a proof for Proposition~\ref{prop:less_conservative}, and show that the COMBO can be less conservative in terms of the estimated value. To recall, let $\Delta^\pi_\text{COMBO} := \E_{\bs, \ba \sim d_{\mdpbar}(\bs), \pi(\ba|\bs)}\left[\hat{Q}^\pi(\bs, \ba \right]$ and let $\Delta^\pi_\text{CQL} := \E_{\bs, \ba \sim d_{\mdpbar}, \pi(\ba|\bs)} \left[\hat{Q}^\pi_\text{CQL}(\bs, \ba) \right]$. From \citet{kumar2020conservative}, we obtain that $\hat{Q}^\pi_\text{CQL}(\bs, \ba) := Q^\pi(\bs, \ba) - \beta \frac{\pi(\ba|\bs) - \pi_\beta(\ba|\bs)}{\pi_\beta(\ba|\bs)}$. We shall derive the condition for the real data fraction $f=1$ for COMBO, thus making sure that $d_f(\bs) = d^{\pi_\beta}(\bs)$. To derive the condition when $\Delta^\pi_\text{COMBO} \geq \Delta^\pi_\text{CQL}$, we note the following simplifications:
\begin{align}
    & \Delta^\pi_\text{COMBO} \geq \Delta^\pi_\text{CQL} \\
    \implies & \sum_{\bs, \ba} d_{\mdpbar}(\bs) \pi(\ba|\bs) \hat{Q}^\pi(\bs, \ba) \geq \sum_{\bs, \ba} d_{\mdpbar}(\bs) \pi(\ba|\bs) \hat{Q}^\pi_\text{CQL}(\bs, \ba) \\
    \label{eqn:cql_vs_combo_terms}
    \implies & \beta \sum_{\bs, \ba} d_{\mdpbar}(\bs)\pi(\ba|\bs) \left( \frac{\rho(\bs, \ba) - d^{\pi_\beta}(\bs) \pi_\beta(\ba|\bs)}{d^{\pi_\beta}(\bs) \pi_\beta(\ba|\bs)} \right) \leq \beta \sum_{\bs, \ba} d_{\mdpbar}(\bs)\pi(\ba|\bs) \left(\frac{\pi(\ba|\bs) - \pi_\beta(\ba|\bs)}{\pi_\beta(\ba|\bs)} \right).
\end{align}
Now, in the expression on the left-hand side, we add and subtract $d^{\pi_\beta}(\bs) \pi(\ba|\bs)$ from the numerator inside the paranthesis.
\begin{align}
    & \sum_{\bs, \ba} d_{\mdpbar}(\bs, \ba) \left( \frac{\rho(\bs, \ba) - d^{\pi_\beta}(\bs) \pi_\beta(\ba|\bs)}{d^{\pi_\beta}(\bs) \pi_\beta(\ba|\bs)} \right)\\
    &= \sum_{\bs, \ba} d_{\mdpbar}(\bs, \ba) \left( \frac{\rho(\bs, \ba) - d^{\pi_\beta}(\bs) \pi(\ba|\bs) + d^{\pi_\beta}(\bs) \pi(\ba|\bs) - d^{\pi_\beta}(\bs) \pi_\beta(\ba|\bs)}{d^{\pi_\beta}(\bs) \pi_\beta(\ba|\bs)} \right)\\
    &= \underbrace{\sum_{\bs, \ba} d_{\mdpbar}(\bs, \ba) \frac{\pi(\ba|\bs) - \pi_\beta(\ba|\bs)}{\pi_\beta(\ba|\bs)}}_{(1)} + \sum_{\bs, \ba} d_{\mdpbar}(\bs, \ba) \cdot \frac{\rho(\bs) - d^{\pi_\beta}(\bs)}{d^{\pi_\beta}(\bs)} \cdot \frac{\pi(\ba|\bs)}{\pi_\beta(\ba|\bs)}
\end{align}
The term marked $(1)$ is identical to the CQL term that appears on the right in Equation~\ref{eqn:cql_vs_combo_terms}. Thus the inequality in Equation~\ref{eqn:cql_vs_combo_terms} is satisfied when the second term above is negative. To show this, first note that $d^{\pi_\beta}(\bs) = d_{\mdpbar}(\bs)$ which results in a cancellation. Finally, re-arranging the second term into expectations gives us the desired result. An analogous condition can be derived when $f \neq 1$, but we omit that derivation as it will be hard to interpret terms appear in the final inequality.

\subsection{Proof of Proposition~\ref{thm:policy_improvement}}
\label{app:proof_policy_improvement}

To prove the policy improvement result in Proposition~\ref{thm:policy_improvement}, we first observe that using Equation~\ref{eqn:combo_iterate} for Bellman backups amounts to finding a policy that maximizes the return of the policy in the a modified ``f-interpolant'' MDP which admits the Bellman backup $\bellmanhat^\pi$, and is induced by a linear interpolation of backups in the empirical MDP $\mdpbar$ and the MDP induced by a dynamics model $\mdphat$ and the return of a policy $\pi$ in this effective f-interpolant MDP is denoted by $J(\mdpbar, \mdphat, f, \pi)$. Alongside this, the return is penalized by the conservative penalty where $\rho^\pi$ denotes the marginal state-action distribution of policy $\pi$ in the learned model $\mdphat$. 
\begin{equation}
    \hat{J}(f, \pi) = J(\mdpbar, \mdphat, f, \pi)  - \beta \frac{\nu(\rho^\pi, f)}{1 - \gamma}.
\label{eqn:penalized_objective}
\end{equation}
We will require bounds on the return of a policy $\pi$ in this f-interpolant MDP, $J(\mdpbar, \mdphat, f, \pi)$, which we first prove separately as Lemma~\ref{lemma:interpolant_regular_bound} below and then move to the proof of Proposition~\ref{thm:policy_improvement}.

\begin{lemma}[Bound on return in f-interpolant MDP]
\label{lemma:interpolant_regular_bound}
For any two MDPs, $\mdp_1$ and $\mdp_2$, with the same state-space, action-space and discount factor, and for a given fraction $f \in [0, 1]$, define the f-interpolant MDP $\mdp_f$ as the MDP on the same state-space, action-space and with the same discount as the MDP with dynamics: $P_{\mdp_f} := f P_{\mdp_1} + (1 - f) P_{\mdp_2}$ and reward function: $r_{\mdp_f} := f r_{\mdp_1} + (1 - f) r_{\mdp_2}$. Then, given any auxiliary MDP, $\mdp$, the return of any policy $\pi$ in $\mdp_f$, $J(\pi, \mdp_f)$, also denoted by $J(\mdp_1, \mdp_2, f, \pi)$, lies in the interval:
\begin{equation*}
    \big[ J(\pi, \mdp) - \alpha,~~ J(\pi, \mdp)+ \alpha \big], \text{~~~~~~~~~~~~where~} \alpha \text{~is given by:~}
\end{equation*}
\begin{align}
    \alpha &= \frac{2 \gamma (1 - f)}{(1 - \gamma)^2} R_{\max} D \left(P_{\mdp_2}, P_{\mdp}\right) + \frac{\gamma f}{1 - \gamma} \left\vert \E_{d^\pi_{\mdp} \pi} \left[ \left(P^\pi_{\mdp} - P^\pi_{\mdp_1}\right) Q^\pi_{\mdp} \right]\right\vert  \nonumber\\
   & + \frac{f}{1 - \gamma} \E_{\bs, \ba \sim d^\pi_{\mdp} \pi}[|r_{\mdp_1}(\bs, \ba) - r_{\mdp}(\bs, \ba)|] + \frac{1 - f}{1 - \gamma} \E_{\bs, \ba \sim d^\pi_{\mdp} \pi}[|r_{\mdp_2}(\bs, \ba) - r_{\mdp}(\bs, \ba)|].  \label{eqn:alpha_expr}
\end{align}
\end{lemma}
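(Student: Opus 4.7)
The plan is to apply the simulation (performance-difference) lemma to the pair $(\mdp_f, \mdp)$ and then exploit the linearity $r_{\mdp_f} = f r_{\mdp_1} + (1-f) r_{\mdp_2}$ and $P_{\mdp_f} = f P_{\mdp_1} + (1-f) P_{\mdp_2}$ to split the resulting error expression into an $\mdp_1$-vs-$\mdp$ piece scaled by $f$ and an $\mdp_2$-vs-$\mdp$ piece scaled by $1-f$, each of which I bound separately.

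First I would derive the simulation-lemma identity by iterating
\begin{equation*}
V^\pi_{\mdp_f} - V^\pi_\mdp = (r_{\mdp_f} - r_\mdp) + \gamma\, P^\pi_{\mdp_f}(V^\pi_{\mdp_f} - V^\pi_\mdp) + \gamma\,(P^\pi_{\mdp_f} - P^\pi_\mdp)\, V^\pi_\mdp,
\end{equation*}
summing the resulting geometric series to obtain an expression of the form $J(\pi,\mdp_f)-J(\pi,\mdp)=\frac{1}{1-\gamma}\E_{d^\pi}[(r_{\mdp_f}-r_\mdp)+\gamma(P_{\mdp_f}-P_\mdp)^\top V^\pi]$ with appropriate choices of the discounted state distribution and value function. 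Substituting the linear decomposition of $r_{\mdp_f}$ and $P_{\mdp_f}$ and applying the triangle inequality then partitions the bound into four summands matching the four terms of $\alpha$.

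Second, I would bound each summand. For the $\mdp_2$ transition contribution, I pass the absolute value inside the expectation, use $\|V^\pi\|_\infty \leq R_{\max}/(1-\gamma)$ together with the Holder/TV inequality $\|P_{\mdp_2}(\cdot|\bs,\ba)-P_\mdp(\cdot|\bs,\ba)\|_1 \leq 2D(P_{\mdp_2},P_\mdp)$; multiplying by $\gamma(1-f)/(1-\gamma)$ gives the first term of $\alpha$. For the $\mdp_2$ reward contribution, taking $|\cdot|$ inside the expectation yields the fourth term. For the $\mdp_1$ reward contribution, the analogous step gives the third term. Crucially, for the $\mdp_1$ transition contribution I would \emph{not} apply a crude worst-case bound, but instead keep the expectation intact (absolute value outside), rewriting $(P^\pi_\mdp - P^\pi_{\mdp_1})V^\pi_\mdp$ as $(P^\pi_\mdp - P^\pi_{\mdp_1})Q^\pi_\mdp$ under $(\bs,\ba)\sim d^\pi_\mdp\pi$; this leaves the second term of $\alpha$ in the delicate form needed so that, when $\mdp_1=\mdpbar$ is an unbiased empirical estimate of $\mdp$ in the downstream use, this term is controlled by sampling concentration rather than by a worst-case TV distance.

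The main obstacle is a bookkeeping mismatch: one version of the simulation lemma pairs $d^\pi_\mdp$ with $V^\pi_{\mdp_f}$ while the other pairs $d^\pi_{\mdp_f}$ with $V^\pi_\mdp$, but $\alpha$ demands the pair $(d^\pi_\mdp, Q^\pi_\mdp)$ simultaneously in its second term. I would handle this by introducing the intermediate MDP $\mdp' := f\mdp_1 + (1-f)\mdp$ and telescoping $J(\pi,\mdp_f)-J(\pi,\mdp)=[J(\pi,\mdp_f)-J(\pi,\mdp')]+[J(\pi,\mdp')-J(\pi,\mdp)]$. The first bracket differs only through the $(1-f)$-weighted perturbation $\mdp_2 - \mdp$ and is bounded by the crude route above, absorbing any $d^\pi_{\mdp_f}$-vs-$d^\pi_\mdp$ discrepancy into the already-crude $R_{\max}/(1-\gamma)$ factor. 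The second bracket is an $f$-scaled $\mdp_1$-vs-$\mdp$ perturbation, for which the standard simulation lemma gives exactly $d^\pi_\mdp$ paired with $V^\pi_\mdp$, and hence (after summing over $\ba\sim\pi$) with $Q^\pi_\mdp$, matching the second and third terms of $\alpha$ verbatim. Assembling the four bounds by the triangle inequality then yields $|J(\pi,\mdp_f)-J(\pi,\mdp)|\leq \alpha$, which is the claimed membership in the interval $[J(\pi,\mdp)-\alpha,\ J(\pi,\mdp)+\alpha]$.
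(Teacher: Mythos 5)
Your proposal is correct and rests on the same two ingredients as the paper's proof---linearity of the return in the reward/dynamics mixture together with the simulation lemma, with the $\mdp_2$-transition piece bounded crudely via $\|V^\pi\|_\infty \leq R_{\max}/(1-\gamma)$ and total variation, and the $\mdp_1$-transition piece deliberately kept as an expectation of $(P^\pi_{\mdp} - P^\pi_{\mdp_1})Q^\pi_{\mdp}$---but your decomposition is genuinely different. The paper works in two stages around the auxiliary MDP: it first fixes the dynamics at $P_{\mdp_f}$ and swaps the mixed reward for $r_{\mdp}$ (a pure linearity-in-reward step), and then fixes the reward at $r_{\mdp}$ and swaps the mixed dynamics for $P_{\mdp}$ via the simulation lemma. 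You instead telescope through the intermediate interpolant $\mdp' = f\mdp_1 + (1-f)\mdp$, so that the $(1-f)$-weighted $\mdp_2$ perturbation and the $f$-weighted $\mdp_1$ perturbation are treated in separate brackets; this has the advantage of making transparent why the delicate second term of $\alpha$ appears weighted by $d^\pi_{\mdp}$ (the bracket $J(\pi,\mdp')-J(\pi,\mdp)$ is a direct perturbation of $\mdp$ itself), whereas the paper's split is slightly shorter. One caveat: the simulation lemma does not hand you the pairing $(d^\pi_{\mdp}, Q^\pi_{\mdp})$ \emph{verbatim}---the exact identity pairs $d^\pi_{\mdp}$ with $V^\pi_{\mdp'}$ (or $d^\pi_{\mdp'}$ with $V^\pi_{\mdp}$)---and similarly the reward expectation in your first bracket technically lands under $d^\pi_{\mdp_f}$ rather than $d^\pi_{\mdp}$. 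These are not gaps relative to the paper, whose own Steps 1--2 make exactly the same elisions (writing $d^\pi_{\mdp}$ and $Q^\pi_{\mdp}$ where the exact identities would mix distributions), so your argument establishes the lemma at the same level of rigor; indeed your telescoping confines the residual mismatch in the $\mdp_1$ term to a second-order quantity, since $V^\pi_{\mdp'}-V^\pi_{\mdp}$ is itself of order $f$.
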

\begin{proof}
To prove this lemma, we note two general inequalities. First, note that for a fixed transition dynamics, say $P$, the return decomposes linearly in the components of the reward as the expected return is linear in the reward function:
\begin{equation*}
    J(P, r_{\mdp_f}) = J(P, f r_{\mdp_1} + (1 - f) r_{\mdp_2}) = f J (P, r_{\mdp_1}) + (1 - f) J(P, r_{\mdp_2}).  
\end{equation*}
As a result, we can bound $J(P, r_{\mdp_f})$ using $J(P, r)$ for a new reward function $r$ of the auxiliary MDP, $\mdp$, as follows
\begin{align*}
     J(P, r_{\mdp_f}) &= J(P, f r_{\mdp_1} + (1 - f) r_{\mdp_2}) = J (P, r + f (r_{\mdp_1} - r) + (1 -f) (r_{\mdp_2} - r)\\
     &= J(P, r) + f J(P, r_{\mdp_1} - r) + (1 - f) J(P, r_{\mdp_2} - r)\\
     &= J(P, r) + \frac{f}{1 - \gamma} \E_{\bs, \ba \sim d^\pi_{\mdp}(\bs) \pi(\ba|\bs)}\left[ r_{\mdp_1}(\bs, \ba) - r(\bs, \ba) \right]\\
     &+ \frac{1 - f}{1 - \gamma} \E_{\bs, \ba \sim d^\pi_{\mdp}(\bs) \pi(\ba|\bs)} \left[ r_{\mdp_2}(\bs, \ba) - r(\bs, \ba) \right].
\end{align*}
Second, note that for a given reward function, $r$, but a linear combination of dynamics, the following bound holds:
\begin{align*}
    J(P_{\mdp_f}, r) &= J(f P_{\mdp_1} + (1 - f) P_{\mdp_2}, r)\\
    &= J ( P_{\mdp} +  f( P_{\mdp_1} - P_{\mdp}) + (1 - f) (P_{\mdp_2} - P_{\mdp}), r)\\ 
    &= J (P_{\mdp}, r) - \frac{\gamma (1 - f)}{1 - \gamma} \E_{\bs, \ba \sim d^\pi_{\mdp}(\bs) \pi(\ba|\bs)} \left[ \left(P^\pi_{\mdp_2} - P^\pi_{\mdp}\right) Q^\pi_{\mdp}  \right]\\
    &- \frac{\gamma f}{1 - \gamma} \E_{\bs, \ba \sim d^\pi_{\mdp}(\bs) \pi(\ba|\bs)} \left[ \left(P^\pi_{\mdp} - P^\pi_{\mdp_1}\right) Q^\pi_{\mdp}  \right]\\
    &\in \left[ J( P_{\mdp}, r) ~\pm~ \left(\frac{\gamma f}{(1 - \gamma)} \left\vert \E_{\bs, \ba \sim d^\pi_{\mdp}(\bs) \pi(\ba|\bs)}\left[ \left(P^\pi_{\mdp} - P^\pi_{\mdp_1}\right) Q^\pi_{\mdp} \right] \right\vert\right.\right.\\
    &\left.\left.+ \frac{2 \gamma (1 -f) R_{\max}}{(1 - \gamma)^2} D(P_{\mdp_2}, P_{\mdp}) \right) \right].
\end{align*}
To observe the third equality, we utilize the result on the difference between returns of a policy $\pi$ on two different MDPs, $P_{\mdp_1}$ and $P_{\mdp_f}$ from \citet{ajksbook} (Chapter 2, Lemma 2.2, Simulation Lemma), and additionally incorporate the auxiliary MDP $\mdp$ in the expression via addition and subtraction in the previous (second) step. In the fourth step, we finally bound one term that corresponds to the learned model via the total-variation divergence $D(P_{\mdp_2}, P_{\mdp})$ and the other term corresponding to the empirical MDP $\mdpbar$ is left in its expectation form to be bounded later. 

Using the above bounds on return for reward-mixtures and dynamics-mixtures, proving this lemma is straightforward:
\begin{align*}
    & J(\mdp_1, \mdp_2, f, \pi) := J(P_{\mdp_f}, f r_{\mdp_1} + (1 - f) r_{\mdp_2}) = J(f P_{\mdp_1} + (1 -f) P_{\mdp_2}, r_{\mdp_f})\\
    &\in \left[ J(P_{\mdp_f}, r_{\mdp}) ~\pm\right.\\
    &\left.~ \underbrace{\left(\frac{f}{1 - \gamma} \E_{\bs, \ba \sim d^\pi_{\mdp} \pi}[|r_{\mdp_1}(\bs, \ba) - r_{\mdp}(\bs, \ba)|] + \frac{1 - f}{1 - \gamma} \E_{\bs, \ba \sim d^\pi_{\mdp} \pi}[|r_{\mdp_2}(\bs, \ba) - r_{\mdp}(\bs, \ba)|] \right)}_{:= \Delta_R} \right],
\end{align*}
where the second step holds via linear decomposition of the return of $\pi$ in $\mdp_f$ with respect to the reward interpolation, and bounding the terms that appear in the reward difference. For convenience, we refer to these offset terms due to the reward as $\Delta_R$. For the final part of this proof, we bound $J(P_{\mdp_f}, r_{\mdp})$ in terms of the return on the actual MDP, $J(P_{\mdp}, r_{\mdp})$, using the inequality proved above that provides intervals for mixture dynamics but a fixed reward function. Thus, the overall bound is given by $J(\pi, \mdp_f) \in [J(\pi, \mdp) - \alpha, J(\pi, \mdp) + \alpha]$, where $\alpha$ is given by:
\begin{align}
\label{eqn:alpha_expr_repeat}
    \alpha = \frac{2 \gamma (1 - f)}{(1 - \gamma)^2} & R_{\max} D \left(P_{\mdp_2}, P_{\mdp}\right) + \frac{\gamma f}{1 - \gamma} \left\vert \E_{d^\pi_{\mdp} \pi} \left[ \left(P^\pi_{\mdp} - P^\pi_{\mdp_1}\right) Q^\pi_{\mdp} \right]\right\vert + \Delta_R.
\end{align}
This concludes the proof of this lemma.
\end{proof}

Finally, we prove Theorem~\ref{thm:policy_improvement} that shows how policy optimization with respect to $\hat{J}(f, \pi)$ affects the performance in the actual MD by using Equation~\ref{eqn:penalized_objective} and building on the  analysis of pure model-free algorithms from \citet{kumar2020conservative}. We restate a more complete statement of the theorem below and present the constants at the end of the proof. 

\begin{theorem}[Formal version of Proposition~\ref{thm:policy_improvement}]
Let $\hat{\pi}_{\text{out}}(\ba|\bs)$ be the policy obtained by COMBO. Assume $\nu(\rho^{\pi_{\text{out}}}, f) - \nu(\rho^\beta, f) \geq C$ for some constant $C > 0$.
Then, the policy ${\pi}_{\text{out}}(\ba|\bs)$ is a $\zeta$-safe policy improvement over ${\behavior}$ in the actual MDP $\mdp$, i.e., $J({\pi}_{\text{out}}, \mdp) \geq J({\behavior}, \mdp) - \zeta$, with probability at least $1 - \delta$, where $\zeta$ is given by (where $\rho^\beta(\bs, \ba) := d^\behavior_{\mdphat}(\bs, \ba)$):
\begin{align*}
&\mathcal{O}\left(\frac{\gamma f}{(1 - \gamma)^2}\right) {\left[ \E_{\bs \sim d^{\pi_{\text{out}}}_{\mdp}}\left[ \sqrt{\frac{|\actions|}{|\data(\bs)|} (\mathrm{D}_{\text{CQL}}({\pi}_{\text{out}}, \behavior) + 1)} \right] \right]}\\
&+ \mathcal{O}\left(\frac{\gamma (1 - f)}{(1 - \gamma)^2}\right) {\mathrm{D_{TV}}(P_{\mdp}, P_{\mdphat})} - \beta \frac{C}{(1 - \gamma)}.
\end{align*}
\end{theorem}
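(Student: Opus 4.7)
The plan is to combine (i) the optimality of $\hat{\pi}_{\text{out}}$ with respect to the COMBO objective $\hat{J}(f, \pi)$ defined in Eq.~\ref{eqn:penalized_objective}, with (ii) the interpolation bound of Lemma~\ref{lemma:interpolant_regular_bound} applied twice, once to $\hat{\pi}_{\text{out}}$ and once to $\behavior$, to translate guarantees in the $f$-interpolant MDP $(\mdpbar, \mdphat)$ into guarantees in the actual MDP $\mdp$. Since $\hat{\pi}_{\text{out}}$ maximizes $\hat{J}(f,\cdot)$, we have $J(\mdpbar,\mdphat,f,\hat{\pi}_{\text{out}}) - \tfrac{\beta}{1-\gamma}\nu(\rho^{\hat{\pi}_{\text{out}}},f) \geq J(\mdpbar,\mdphat,f,\behavior) - \tfrac{\beta}{1-\gamma}\nu(\rho^{\behavior},f)$. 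Rearranging and invoking the hypothesis $\nu(\rho^{\hat{\pi}_{\text{out}}},f)-\nu(\rho^{\behavior},f)\geq C$ produces the $+\beta C/(1-\gamma)$ term on the right-hand side of $\zeta$.

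Next I would use Lemma~\ref{lemma:interpolant_regular_bound} with auxiliary MDP $\mdp$, setting $\mdp_1 = \mdpbar$ and $\mdp_2 = \mdphat$, to obtain
\[
J(\hat{\pi}_{\text{out}},\mdp) \;\geq\; J(\mdpbar,\mdphat,f,\hat{\pi}_{\text{out}}) - \alpha_{\hat{\pi}_{\text{out}}}, \qquad J(\mdpbar,\mdphat,f,\behavior) \;\geq\; J(\behavior,\mdp) - \alpha_{\behavior},
\]
where each $\alpha$ is the explicit expression in Eq.~\ref{eqn:alpha_expr_repeat}. Chaining these with the optimality inequality above yields $J(\hat{\pi}_{\text{out}},\mdp) \geq J(\behavior,\mdp) + \tfrac{\beta C}{1-\gamma} - (\alpha_{\hat{\pi}_{\text{out}}} + \alpha_{\behavior})$, so it remains to bound the two $\alpha$ terms and match them to the claimed rates.

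The two pieces of $\alpha$ that are not already in the desired form are (a) the model term $\tfrac{2\gamma(1-f)}{(1-\gamma)^2}R_{\max} D(P_{\mdphat},P_{\mdp})$, which already matches the stated $\mathcal{O}(\gamma(1-f)/(1-\gamma)^2) D_{\mathrm{TV}}$ term, and (b) the sampling-error term $\tfrac{\gamma f}{1-\gamma}\bigl|\E_{d^\pi_{\mdp}\pi}[(P^\pi_{\mdp}-P^\pi_{\mdpbar})Q^\pi_{\mdp}]\bigr|$ together with the analogous reward difference. For (b) I would apply Assumption~\ref{assumption:conc} together with the concentration bound $\|P_{\mdpbar}(\cdot\mid\bs,\ba)-P(\cdot\mid\bs,\ba)\|_1 \leq C_{P,\delta}/\sqrt{|\data(\bs,\ba)|}$, then convert the $|\data(\bs,\ba)|$ factor into $|\data(\bs)|/\pi(\ba\mid\bs)$ via importance reweighting and apply Cauchy--Schwarz in the action variable, which introduces $|\actions|$ and the CQL divergence $D_{\text{CQL}}(\hat{\pi}_{\text{out}},\behavior) = \sum_{\bs}d^{\hat{\pi}_{\text{out}}}_{\mdp}(\bs)\sum_{\ba}\pi(\ba\mid\bs)^2/\behavior(\ba\mid\bs)$. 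The reward concentration gives the additive $+1$ inside the square root. The same bound applied with $\pi=\behavior$ yields $\alpha_{\behavior}=O(1/\sqrt{|\data|})$, which is absorbed into the dominant $\alpha_{\hat{\pi}_{\text{out}}}$ term up to constants.

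The main obstacle will be step (b): carefully converting the $L_1$ dynamics concentration under the state--action counts $|\data(\bs,\ba)|$ into an expression that depends only on the marginal counts $|\data(\bs)|$ and the CQL-style policy distance, so that the final bound has the stated form involving $\sqrt{|\actions|/|\data(\bs)|\cdot D_{\text{CQL}}(\hat{\pi}_{\text{out}},\behavior)}$ under $d^{\hat{\pi}_{\text{out}}}_{\mdp}$. I would follow the reweighting/Cauchy--Schwarz manipulation used in the analogous step of Kumar et al.\ (Proposition~3.6 in CQL), but adapted so that the model-based fraction $(1-f)$ peels off the sampling term and attaches to the TV model-bias term. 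Once (b) is in place, collecting all pieces produces exactly the stated $\zeta$, with the constant $C$ controlling how large $\beta$ must be to absorb terms (1) and (2).
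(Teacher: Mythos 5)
Your proposal follows essentially the same route as the paper: optimality of $\hat{\pi}_{\text{out}}$ under the penalized $f$-interpolant objective of Eq.~\ref{eqn:penalized_objective}, two applications of Lemma~\ref{lemma:interpolant_regular_bound} (to $\hat{\pi}_{\text{out}}$ and to $\behavior$) to transfer the improvement into $\mdp$, the TV term handling the model bias, and the CQL-style concentration argument (which the paper simply cites from Kumar et al.) to turn the empirical-dynamics error into the $\sqrt{|\actions|/|\data(\bs)|\,(\mathrm{D}_{\text{CQL}}+1)}$ form, with the behavior-policy sampling term and reward terms absorbed as lower-order. The only minor slips are in step (b): the count factorizes as $|\data(\bs,\ba)| \approx |\data(\bs)|\,\behavior(\ba|\bs)$ (not $|\data(\bs)|/\pi(\ba|\bs)$), and the $+1$ inside the square root comes from the identity $\sum_{\ba}\pi(\ba|\bs)^2/\behavior(\ba|\bs) = \mathrm{D}_{\text{CQL}}(\pi,\behavior)(\bs)+1$ rather than from reward concentration; both are handled by the CQL manipulation you defer to and do not change the argument.
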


\begin{proof}
We first note that since policy improvement is not being performed in the same MDP, $\mdp$ as the f-interpolant MDP, $\mdp_f$, we need to upper and lower bound the amount of improvement occurring in the actual MDP due to the f-interpolant MDP. As a result our first is to relate $J(\pi, \mdp)$ and $J(\pi, \mdp_f) := J(\mdpbar, \mdphat, f, \pi)$ for any given policy $\pi$.

\textbf{Step 1: Bounding the return in the actual MDP due to optimization in the f-interpolant MDP.} By directly applying Lemma~\ref{lemma:interpolant_regular_bound} stated and proved previously, we obtain the following upper and lower-bounds on the return of a policy $\pi$:
\begin{equation*}
    J(\mdpbar, \mdphat, f, \pi) \in \left[ J(\pi, \mdp) - \alpha,~~ J(\pi, \mdp) + \alpha \right],
\end{equation*}
where $\alpha$ is shown in Equation~\ref{eqn:alpha_expr}. As a result, we just need to bound the terms appearing the expression of $\alpha$ to obtain a bound on the return differences. We first note that the terms in the expression for $\alpha$ are of two types: \textbf{(1)} terms that depend only on the reward function differences (captured in $\Delta_R$ in Equation~\ref{eqn:alpha_expr_repeat}), and \textbf{(2)} terms that depend on the dynamics (the other two terms in Equation~\ref{eqn:alpha_expr_repeat}). 

To bound $\Delta_R$, we simply appeal to concentration inequalities on reward (Assumption~\ref{assumption:conc}), and bound $\Delta_R$ as:
\begin{align*}
\Delta_R &:= \frac{f}{1 - \gamma} \E_{\bs, \ba \sim d^\pi_{\mdp} \pi}[|r_{\mdp_1}(\bs, \ba) - r_{\mdp}(\bs, \ba)|] + \frac{1 - f}{1 - \gamma} \E_{\bs, \ba \sim d^\pi_{\mdp} \pi}[|r_{\mdp_2}(\bs, \ba) - r_{\mdp}(\bs, \ba)|]\\
&\leq \frac{C_{r, \delta}}{1 - \gamma} \E_{\bs, \ba \sim d^\pi_{\mdp}\pi} \left[\frac{1}{\sqrt{D(\bs, \ba)}}\right] + \frac{1}{1 - \gamma} ||R_{\mdp} - R_{\mdphat}|| := \Delta_R^u.
\end{align*}
Note that both of these terms are of the order of $\mathcal{O}(1/ (1 - \gamma))$ and hence they don't figure in the informal bound in Theorem~\ref{thm:policy_improvement} in the main text, as these are dominated by terms that grow quadratically with the horizon.
To bound the remaining terms in the expression for $\alpha$, we utilize a result directly from \citet{kumar2020conservative} for the empirical MDP, $\mdpbar$, which holds for any policy $\pi(\ba|\bs)$, as shown below.
\begin{align*}
   &\frac{\gamma}{(1 - \gamma)} \left\vert \E_{\bs, \ba \sim d^\pi_{\mdp}(\bs) \pi(\ba|\bs)}\left[ \left(P^\pi_{\mdp} - P^\pi_{\mdp_1}\right) Q^\pi_{\mdp} \right] \right\vert \\
   &\leq \frac{2 \gamma R_{\max} C_{P, \delta}}{(1 - \gamma)^2} \mathbb{E}_{\bs \sim d^{\policy}_{\mdpbar}(\bs)}\left[ \frac{\sqrt{|\mathcal{A}|}}{\sqrt{|\mathcal{D}(\bs)|}} \sqrt{ D_{\text{CQL}}(\policy, \behavior)(\bs) + 1} \right].
\end{align*}

\textbf{Step 2: Incorporate policy improvement in the f-inrerpolant MDP.} Now we incorporate the improvement of policy $\pi_{\text{out}}$ over the policy $\behavior$ on a weighted mixture of $\mdphat$ and $\mdpbar$. In what follows, we derive a lower-bound on this improvement by using the fact that policy $\pi_{\text{out}}$ is obtained by maximizing $\hat{J}(f, \pi)$ from Equation~\ref{eqn:penalized_objective}. As a direct consequence of Equation~\ref{eqn:penalized_objective}, we note that 
\begin{equation}
\label{eqn:improvement_expanded}
    \hat{J}(f, \pi_{\text{out}}) =  J(\mdpbar, \mdphat, f, \pi_{\text{out}}) - \beta \frac{\nu(\rho^\pi, f)}{1 - \gamma} \geq \hat{J}(f, \behavior) =  J(\mdpbar, \mdphat, f, \behavior) - \beta {\frac{\nu(\rho^\beta, f)}{1 - \gamma}}
\end{equation}

Following \textbf{Step 1}, we will use the upper bound on $J(\mdpbar, \mdphat, f, \pi)$ for policy $\pi = \pi_{\text{out}}$ and a lower-bound on $J(\mdpbar, \mdphat, f, \pi)$ for policy $\pi = \behavior$ and obtain the following inequality:
\begin{align*}
    J(\pi_{\text{out}}, \mdp) - \beta \frac{\nu(\rho^\pi, f)}{1 - \gamma} ~&\geq~ \Big\{ J(\behavior, \mdp) - \beta \frac{\nu(\rho^\beta, f)}{1 - \gamma}
    - \frac{4 \gamma (1 - f) R_{\max}}{(1 - \gamma)^2} D(P_{\mdp}, P_{\mdphat}) \\ 
    &- \underbrace{\frac{2 \gamma f}{(1 - \gamma)}\left\vert\E_{d^{\pi_{\text{out}}}_{\mdp}} \left[ \left(P^{\pi_{\text{out}}}_{\mdp} - P^{\pi_{\text{out}}}_{\mdpbar}\right) Q^{\pi_{\text{out}}}_{\mdp}  \right] \right\vert}_{:= (*)}\nonumber\\
    &- \underbrace{\frac{4 \gamma R_{\max} C_{P, \delta} f}{(1 - \gamma)^2} \E_{\bs \sim d^\behavior_{\mdp}}\left[ \sqrt{\frac{|\actions|}{|\data(\bs)|}}\right]}_{:= (\wedge)} - \Delta_R^u \Big\}.
\end{align*}
The term marked by $(*)$ in the above expression can be upper bounded by the concentration properties of the dynamics as done in Step 1 in this proof: 
\begin{align}
\label{eqn:bound_mdp_mdphat}
    (*) \leq \frac{4 \gamma f C_{P, \delta} R_{\max}}{(1 - \gamma)^2} \mathbb{E}_{\bs \sim d^{{\pi_{\text{out}}}}_{\mdp}(\bs)}\left[ \frac{\sqrt{|\mathcal{A}|}}{\sqrt{|\mathcal{D}(\bs)|}} \sqrt{ D_{\text{CQL}}({\pi_{\text{out}}}, \behavior)(\bs) + 1} \right]. 
\end{align}
Finally, using Equation~\ref{eqn:bound_mdp_mdphat}, we can lower-bound the policy return difference as:
\begin{align*}
    J(\pi_{\text{out}}, \mdp) - J(\behavior, \mdp) &\geq \beta \frac{\nu(\rho^\pi, f)}{1 - \gamma} - \beta \frac{\nu(\rho^\beta, f)}{1 - \gamma} - \frac{4 \gamma (1 -f) R_{\max}}{(1 - \gamma)^2} D(P_{\mdp}, P_{\mdphat}) - (*) - \Delta_R^u\\
    &\geq \beta \frac{C}{1 - \gamma} - \frac{4 \gamma (1 -f) R_{\max}}{(1 - \gamma)^2} D(P_{\mdp}, P_{\mdphat}) - (*) - \Delta_R^u.
\end{align*}
Plugging the bounds for terms (a), (b) and (c) in the expression for $\zeta$ where $J(\pi_{\text{out}}, \mdp) - J(\behavior, \mdp) \geq \zeta$, we obtain:
\begin{align}
\zeta &= \left({\frac{4f \gamma R_{\max} C_{P, \delta}}{(1 - \gamma)^2}} \right)\mathbb{E}_{\bs \sim d^{\policy_{\text{out}}}_{\mdp}(\bs)}\left[ \frac{\sqrt{|\mathcal{A}|}}{\sqrt{|\mathcal{D}(\bs)|}} \sqrt{ D_{\text{CQL}}(\policy_{\text{out}}, \behavior)(\bs) + 1} \right]  + (\wedge) - \Delta_R^u \nonumber\\
\label{eqn:zeta_expression}
&~~~~~~~~~~~~+ \frac{4 (1 -f) \gamma R_{\max}}{(1 - \gamma)^2} D(P_{\mdp}, P_{\mdphat}) - \beta \frac{C}{1 - \gamma}.
\end{align}
\end{proof}

\begin{remark}[\underline{\textbf{Interpretation of Proposition~\ref{thm:policy_improvement}}}] 
\label{remark:remark1}
Now we will interpret the theoretical expression for $\zeta$ in Equation~\ref{eqn:zeta_expression}, and discuss the scenarios when it is \emph{negative}. When the expression for $\zeta$ is negative, the policy $\pi_{\text{out}}$ is an improvement over $\behavior$ in the original MDP, $\mdp$. 

\begin{itemize}
    \item We first discuss if the assumption of $\nu(\rho^{\pi_{\text{out}}}, f) - \nu(\rho^\beta, f) \geq C > 0$ is reasonable in practice. Note that we have never used the fact that the learned model $P_{\mdphat}$ is close to the actual MDP, $P_{\mdp}$ on the states visited by the behavior policy $\behavior$ in our analysis. We will use this fact now: in practical scenarios, $\nu(\rho^\beta, f)$ is expected to be smaller than $\nu(\rho^\pi, f)$, since $\nu(\rho^\beta, f)$ is directly controlled by the difference and density ratio of $\rho^\beta(\bs, \ba)$ and $d(\bs, \ba)$: $\nu(\rho^\beta, f) \leq \nu(\rho^\beta, f=1) = \sum_{\bs, \ba} d^\behavior_{\mdphat}(\bs, \ba) \left(d^\behavior_{\mdphat}(\bs, \ba)/d^\behavior_{\mdpbar}(\bs, \ba) - 1\right)^2$ by Lemma~\ref{thm:line_thm} which is expected to be small for the behavior policy $\behavior$ in cases when the behavior policy marginal in the empirical MDP, $d^\behavior_{\mdpbar}(\bs, \ba)$, is broad. This is a direct consequence of the fact that the learned dynamics integrated with the policy under the learned model: $P_{\mdphat}^\behavior$ is closer to its counterpart in the empirical MDP:  $P_{\mdpbar}^\behavior$ for $\behavior$. Note that this is not true for any other policy besides the behavior policy that performs several counterfactual actions in a rollout and deviates from the data. For such a learned policy $\pi$, we incur an extra error which depends on the importance ratio of policy densities, compounded over the horizon and manifests as the $D_{\mathrm{CQL}}$ term (similar to Equation~\ref{eqn:bound_mdp_mdphat}, or Lemma D.4.1 in \citet{kumar2020conservative}). Thus, in practice, we argue that we are interested in situations where the assumption $\nu(\rho^{\pi_{\text{out}}}, f) - \nu(\rho^\beta, f) \geq C > 0$ holds, in which case by increasing $\beta$, we can make the expression for $\zeta$ in Equation~\ref{eqn:zeta_expression} negative, allowing for policy improvement.
    \item In addition, note that when $f$ is close to 1, the bound reverts to a standard model-free policy improvement bound and when $f$ is close to 0, the bound reverts to a typical model-based policy improvement bound. In scenarios with high sampling error (i.e. smaller $|\mathcal{D}(\bs)|$), if we can learn a good model, i.e., $D(P_{\mdp}, P_{\mdphat})$ is small, we can attain policy improvement better than model-free methods by relying on the learned model by setting $f$ closer to 0. A similar argument can be made in reverse for handling cases when learning an accurate dynamics model is hard. 
\end{itemize}
\end{remark}

\section{Experimental details}
\label{app:details}

In this section, we include all details of our empirical evaluations of COMBO.

\subsection{Practical algorithm implementation details}
\label{app:combo_details}

\paragraph{Model training.}

In the setting where the observation space is low-dimensional, as mentioned in Section~\ref{sec:combo},  we represent the model as a probabilistic neural network that outputs a Gaussian distribution over the next state and reward given the current state and action: $$\widehat{T}_\theta(\bs_{t+1}, r| \bs, \ba) = \mathcal{N}(\mu_\theta(\bs_t, \ba_t), \Sigma_\theta(\bs_t, \ba_t)).$$ We train an ensemble of $7$ such dynamics models following \cite{janner2019trust} and pick the best $5$ models based on the validation prediction error on a held-out set that contains $1000$ transitions in the offline dataset $\data$. During model rollouts, we randomly pick one dynamics model from the best $5$ models. Each model in the ensemble is represented as a 4-layer feedforward neural network with $200$ hidden units. For the generalization experiments in Section~\ref{sec:generalization_exps}, we additionally use a two-head architecture to output the mean and variance after the last hidden layer following \cite{yu2020mopo}.

In the image-based setting, we follow \citet{Rafailov2020LOMPO} and use a variational model with the following components:

\begin{gather}
\begin{aligned}
&\text{Image encoder:} && \mathbf{h}_t=E_\theta(\bo_t) \\
&\text{Inference model:} && \bs_t \sim q_\theta(\bs_t|\mathbf{h}_t, \bs_{t-1}, \ba_{t-1})\\
&\text{Latent transition model:} &&\bs_t \sim \widehat{T}_\theta(\bs_t| \bs_{t-1}, \ba_{t-1})\\
&\text{Reward predictor:} && r_t \sim p_\theta(r_t|\bs_t) \\
&\text{Image decoder:} && \bo_t \sim D_\theta(\bo_t|\bs_t).
\label{eq:latent_model}
\end{aligned}
\end{gather}%

We train the model using the evidence lower bound:

$$\max_{\theta}\sum_{\tau=0}^{T-1}\Big[\mathbb{E}_{q_{\theta}}[\log D_{\theta}(\bo_{\tau+1}|\bs_{\tau+1})]\Big]-\mathbb{E}_{q_{\theta}}\Big[D_{KL}[q_{\theta}(\bo_{\tau+1}, \bs_{\tau+1}|\bs_{\tau}, \ba_{\tau})\|\widehat{T}_{\theta_{\tau}}(\bs_{\tau+1}, a_{\tau+1})]\Big]$$

At each step $\tau$ we sample a latent forward model $\widehat{T}_{\theta_{\tau}}$ from a fixed set of $K$ models $[\widehat{T}_{\theta_1},\ldots, \widehat{T}_{\theta_K}]$. For the encoder $E_{\theta}$ we use a convolutional neural network with kernel size 4 and stride 2. For the Walker environment we use 4 layers, while the Door Opening task has 5 layers. The $D_{\theta}$ is a transposed convolutional network with stride 2 and kernel sizes $[5,5,6,6]$ and $[5,5,5,6,6]$ respectively. The inference network has a two-level structure similar to \citet{Hafner2019PlanNet} with a deterministic path using a GRU cell with 256 units and a stochastic path implemented as a conditional diagonal Gaussian with 128 units. We only train an ensemble of stochastic forward models, which are also implemented as conditional diagonal Gaussians.

\paragraph{Policy Optimization.} We sample a batch size of $256$ transitions for the critic and policy learning. We set $f = 0.5$, which means we sample $50\%$ of the batch of transitions from $\data$ and another $50\%$ from $\data_\text{model}$. The equal split between the offline data and the model rollouts strikes the balance between conservatism and generalization in our experiments as shown in our experimental results in Section~\ref{sec:exp}. We represent the Q-networks and policy as 3-layer feedforward neural networks with $256$ hidden units.

For the choice of $\rho(\bs,\ba)$ in Equation~\ref{eq:implicit_update}, we can obtain the Q-values that lower-bound the true value of the learned policy $\pi$ by setting $\rho(\bs,\ba) = d^\policy_{\mdphat} (\bs) \pi(\ba | \bs)$. However, as discussed in \cite{kumar2020conservative}, computing $\pi$ by alternating the full off-policy evaluation for the policy $\hat{\pi}^k$ at each iteration $k$ and one step of policy improvement is computationally expensive. Instead, following \cite{kumar2020conservative}, we pick a particular distribution $\psi(\ba|\bs)$ that approximates the the policy that maximizes the Q-function at the current iteration and set $\rho(\bs,\ba) = d^\policy_{\mdphat} (\bs) \psi(\ba | \bs)$. We formulate the new objective as follows:
\begin{small}
\begin{align}
    \hat{Q}^{k+1} \leftarrow& \arg\min_{Q}\beta\left(\E_{\bs \sim d^\policy_{\mdphat} (\bs), \ba\sim \psi(\ba | \bs)}\!\left[Q(\bs,\ba)\right]-\E_{\bs, \ba \sim \data}\left[Q(\bs,\ba)\right]\right)\nonumber\\
    &+ \frac{1}{2}\E_{\bs, \ba, \bs' \sim d_f}\left[ \left(Q(\bs, \ba) - \widehat{\bellman}^\policy\hat{Q}^k(\bs, \ba))\right)^2 \right] + \mathcal{R}(\psi),
    \label{eq:combo_update_practical}
\end{align}
\end{small}
where $\mathcal{R}(\psi)$ is a regularizer on $\psi$. In practice, we pick $\mathcal{R}(\psi)$ to be the $-D_\text{KL}(\psi(\ba|\bs)\|\text{Unif}(\ba))$ and under such a regularization, the first term in Equation~\ref{eq:combo_update_practical} corresponds to computing softmax of the Q-values at any state $\bs$ as follows:
\begin{small}
\begin{align}
    \hat{Q}^{k+1} \leftarrow& \arg\min_{Q}\max_\psi\beta\left(\E_{\bs \sim d^\policy_{\mdphat} (\bs)}\!\left[\log\sum_\ba Q(\bs,\ba)\right]-\E_{\bs, \ba \sim \data}\left[Q(\bs,\ba)\right]\right) \nonumber\\
    &+ \frac{1}{2}\E_{\bs, \ba, \bs' \sim d_f}\left[ \left(Q(\bs, \ba) - \widehat{\bellman}^\policy\hat{Q}^k(\bs, \ba))\right)^2 \right].
    \label{eq:combo_logsumexp}
\end{align}
\end{small}
We estimate the \texttt{log-sum-exp} term in Equation~\ref{eq:combo_logsumexp} by sampling $10$ actions at every state $\bs$ in the batch from a uniform policy $\text{Unif}(\ba)$ and the current learned policy $\pi(\ba|\bs)$ with importance sampling following \cite{kumar2020conservative}.

\subsection{Hyperparameter Selection}
\label{app:hyperparameter}

\neurips{In this section, we discuss the hyperparameters that we use for COMBO. In the D4RL and generalization experiments, our method are built upon the implementation of MOPO provided at: \url{https://github.com/tianheyu927/mopo}. The hyperparameters used in COMBO that relates to the backbone RL algorithm SAC such as twin Q-functions and number of gradient steps follow from those used in MOPO with the exception of smaller critic and policy learning rates, which we will discuss below. In the image-based domains, COMBO is built upon LOMPO without any changes to the parameters used there. For the evaluation of COMBO, we follow the evaluation protocol in D4RL~\citep{fu2020d4rl} and a variety of prior offline RL works~\citep{kumar2020conservative,yu2020mopo,kidambi2020morel} and report the normalized score of the smooth undiscounted averaged return over $3$ random seeds for all environments except \texttt{sawyer-door-close} and \texttt{sawyer-door} where we report the average success rate over $3$ random seeds.} \iclr{As mentioned in Section~\ref{sec:combo}, we use the regularization objective in Eq.~\ref{eq:implicit_update} to select the hyperparameter from a range of pre-specified candidates in a fully offline manner, unlike prior model-based offline RL schemes such as \cite{yu2020mopo} and \cite{kidambi2020morel} that similar hyperparameters as COMBO and tune them manually based on policy performance obtained via online rollouts.}

\neurips{We now list the additional hyperparameters as follows.
\begin{itemize}
    \item \textbf{Rollout length $h$.} We perform a short-horizon model rollouts in COMBO similar to \citet{yu2020mopo} and \citet{Rafailov2020LOMPO}. For the D4RL experiments and generalization experiments, we followed the defaults used in MOPO and used $h = 1$ for walker2d and \texttt{sawyer-door-close}, $h=5$ for hopper, halfcheetah and \texttt{halfcheetah-jump}, and $h=25$ for \texttt{ant-angle}. In the image-based domain we used rollout length of $h=5$ for both the the \texttt{walker-walk} and \texttt{sawyer-door-open} environments following the same hyperparameters used in \citet{Rafailov2020LOMPO}.
    \item \textbf{Q-function and policy learning rates.} On state-based domains, we apply our automatic selection rule to the set $\{1e-4, 3e-4\}$ for the Q-function learning rate and the set $\{1e-5, 3e-5, 1e-4\}$ for the policy learning rate. 
    We found that $3e-4$ for the Q-function learning rate (also used previously in \citet{kumar2020conservative}) and $1e-4$ for the policy learning rate (also recommended previously in \citet{kumar2020conservative} for gym domains) work well for almost all domains except that on walker2d where a smaller Q-function learning rate of $1e-4$ and a correspondingly smaller policy learning rate of $1e-5$ works the best according to our automatic hyperparameter selection scheme. In the image-based domains, we followed the defaults from prior work \citep{Rafailov2020LOMPO} and used $3e-4$ for both the policy and Q-function.
    
    \item \textbf{Conservative coefficient $\beta$.} 
    We \iclr{use our hyperparameter selection rule to select the right $\beta$ from the set} $\{0.5, 1.0, 5.0\}$ for $\beta$, which correspond to low conservatism, medium conservatism and high conservatism.  A larger $\beta$ would be desirable in more narrow dataset distributions with lower-coverage of the state-action space that propagates error in a backup whereas a smaller $\beta$ is desirable with diverse dataset distributions. On the D4RL experiments, we found that $\beta = 0.5$ works well for halfcheetah agnostic of dataset quality, while on hopper and walker2d, we found that the more ``narrow'' dataset distributions: medium and medium-expert datasets work best with larger $\beta = 5.0$ whereas more ``diverse'' dataset distributions: random and medium-replay datasets work best with smaller $\beta=0.5$ which is consistent with the intuition. 
    On generalization experiments, $\beta = 1.0$ works best for all environments. In the image-domains we use $\beta=0.5$ for the medium-replay \texttt{walker-walk} task and and $\beta=1.0$ for all other domains, which again is in accordance with the impact of $\beta$ on performance.

    \item \textbf{Choice of $\rho(\bs,\ba)$.} We first decouple $\rho(\bs,\ba) = \rho(\bs)\rho(\ba|\bs)$ for convenience. As discussed in Appendix~\ref{app:combo_details}, we use $\rho(\ba|\bs)$ as the soft-maximum of the Q-values and estimated with \texttt{log-sum-exp}. For $\rho(\bs)$, \iclr{we apply the automatic hyperparameter selection rule to the set} $\{d^\policy_{\mdphat}, \rho(\bs)=d_f\}$.  We found that $d^\policy_{\mdphat}$ works better the hopper task in D4RL while $d_f$ is better for the rest of the environments. For the remaining domains, we found $\rho(\bs)=d_f$ works well.

    \item \textbf{Choice of $\mu(\ba|\bs)$.} For the rollout policy $\mu$, we \iclr{use our automatic selection rule on the set} $\{\text{Unif}(\ba), \pi(\ba|\bs)\}$, i.e. the set that contains a random policy and a current learned policy. We found that $\mu(\ba|\bs) = \text{Unif}(\ba)$ works well on the hopper task in D4RL and also in the $\texttt{ant-angle}$ generalization experiment. For the remaining state-based environments, we discovered that $\mu(\ba|\bs) = \pi(\ba|\bs)$ excels. In the image-based domain, we found that $\mu(\ba|\bs) = \text{Unif}(\ba)$ works well in the \texttt{walker-walk} domain and  $\mu(\ba|\bs) = \pi(\ba|\bs)$ is better for the \texttt{sawyer-door} environment. 
    We observed that
    $\mu(\ba|\bs) = \text{Unif}(\ba)$ behaves less conservatively and is suitable to tasks where dynamics models can be learned fairly precisely.
    \item \textbf{Choice of $f$.} For the ratio between model rollouts and offline data $f$, we \iclr{input the set} $\{0.5, 0.8\}$ \iclr{to our automatic hyperparameter selection rule to figure out the best $f$ on each domain}. We found that $f = 0.8$ works well on the medium and medium-expert in the walker2d task in D4RL. For the remaining environments, we find $f = 0.5$ works well.
\end{itemize}}

\iclr{We also provide additional experimental results on how our automatic hyperparameter selection rule selects hyperparameters. As shown in Table~\ref{tab:beta_selection},~\ref{tab:mu_selection}, ~\ref{tab:rho_selection} and \ref{tab:f_selection}, our automatic hyperparameter selection rule is able to pick the hyperparameters $\beta$, $\mu(\ba|\bs)$, $\rho(\bs)$ and $f$ and  that correspond to the best policy performance based on the regularization value.}

\begin{table}[ht]
    \centering
    \scriptsize
    \begin{tabular}{l|r|r|r|r|}
    \toprule
    Task & $\beta=0.5$ & $\beta=0.5$ & $\beta=5.0$ & $\beta=5.0$\\
 & performance & regularizer value & performance & regularizer value\\
 \midrule
halfcheetah-medium &  \textbf{54.2}  & \textbf{-778.6}  & 40.8  & -236.8  \\
halfcheetah-medium-replay &  \textbf{55.1} & \textbf{28.9} & 9.3 & 283.9\\ 
halfcheetah-medium-expert & 89.4 & 189.8 & \textbf{90.0}  & \textbf{6.5}\\
hopper-medium      &  75.0  & -740.7  &\textbf{97.2}  & \textbf{-2035.9}\\
hopper-medium-replay & \textbf{89.5} & \textbf{37.7} & 28.3       & 107.2\\
hopper-medium-expert & \textbf{111.1}       & \textbf{-705.6}    & 75.3 &       -64.1\\
walker2d-medium        &  1.9  & 51.5  & \textbf{81.9}  & \textbf{-1991.2}\\
walker2d-medium-replay & \textbf{56.0}       & \textbf{-157.9}    & 27.0       & 53.6\\
walker2d-medium-expert & 10.3       & -788.3    &\textbf{103.3}       & \textbf{-3891.4}\\
    \bottomrule
    \end{tabular}
    \caption{\footnotesize We include our automatic hyperparameter selection rule of $\beta$ on a set of representative D4RL environments. We show the policy performance (bold with the higher number) and the regularizer value (bold with the lower number). Lower regularizer value consistently corresponds to the higher policy return, suggesting the effectiveness of our automatic selection rule.}
    \label{tab:beta_selection}
\end{table}

\begin{table}[ht]
    \centering
    \scriptsize
    \begin{tabular}{l|r|r|r|r|}
    \toprule
    Task & $\mu(\ba|\bs)=\text{Unif}(\ba)$ & $\mu(\ba|\bs)=\text{Unif}(\ba)$            &$\mu(\ba|\bs)=\pi(\ba|\bs)$&$\mu(\ba|\bs)=\pi(\ba|\bs)$\\
 & performance & regularizer value & performance & regularizer value\\
 \midrule
hopper-medium        & \textbf{97.2}  & \textbf{-2035.9} &  52.6  & -14.9  \\
walker2d-medium        &  7.9  & -106.8  & \textbf{81.9}  & \textbf{-1991.2} \\
    \bottomrule
    \end{tabular}
    \caption{\footnotesize We include our automatic hyperparameter selection rule of $\mu(\ba|\bs)$ on the medium datasets in the hopper and walker2d environments from D4RL. We follow the same convention defined in Table~\ref{tab:beta_selection} and find that our automatic selection rule can effectively select $\mu$ offline.}
    \label{tab:mu_selection}
\end{table}

\begin{table}[ht]
    \centering
    \scriptsize
    \begin{tabular}{l|r|r|r|r|}
    \toprule
    Task & $\rho(\bs) = d^\pi_{\hat{\mathcal{M}}} $&$\rho(\bs) = d^\pi_{\hat{\mathcal{M}}}$            &$\rho(\bs) = d_f$&$\rho(\bs) = d_f$\\
 & performance & regularizer value & performance & regularizer value\\
 \midrule
hopper-medium        & \textbf{97.2}  & \textbf{-2035.9} &  56.0  & -6.0  \\
walker2d-medium        &  1.8  & 14617.4  & \textbf{81.9}  & \textbf{-1991.2} \\
    \bottomrule
    \end{tabular}
    \caption{\footnotesize We include our automatic hyperparameter selection rule of $\rho(\bs)$ on the medium datasets in the hopper and walker2d environments from D4RL. We follow the same convention defined in Table~\ref{tab:beta_selection} and find that our automatic selection rule can effectively select $\rho$ offline.}
    \label{tab:rho_selection}
\end{table}

\begin{table}[ht]
    \centering
    \scriptsize
    \begin{tabular}{l|r|r|r|r|}
    \toprule
    Task & $f = 0.5 $&$f = 0.5$            &$f = 0.8$&$f = 0.8$\\
 & performance & regularizer value & performance & regularizer value\\
 \midrule
hopper-medium        & \textbf{97.2}  & \textbf{-2035.9} &  93.8  & -21.3  \\
walker2d-medium        &  70.9  & -1707.0  & \textbf{81.9}  & \textbf{-1991.2} \\
    \bottomrule
    \end{tabular}
    \caption{\footnotesize We include our automatic hyperparameter selection rule of $f$ on the medium datasets in the hopper and walker2d environments from D4RL. We follow the same convention defined in Table~\ref{tab:beta_selection} and find that our automatic selection rule can effectively select $f$ offline.}
    \label{tab:f_selection}
\end{table}

\subsection{Details of generalization environments}
\label{app:ood_details}

For \texttt{halfcheetah-jump} and \texttt{ant-angle}, we follow the same environment used in MOPO. For \texttt{sawyer-door-close}, we train the \texttt{sawyer-door} environment in \url{https://github.com/rlworkgroup/metaworld} with dense rewards for opening the door until convergence. We collect $50000$ transitions with half of the data collected by the final expert policy and a policy that reaches the performance of about half the expert level performance. We relabel the reward such that the reward is $1$ when the door is fully closed and $0$ otherwise. Hence, the offline RL agent is required to learn the behavior that is different from the behavior policy in a sparse reward setting. We provide the datasets in the following anonymous link\footnote{The datasets of the generalization environments are available at the anonymous link: \url{https://drive.google.com/file/d/1pn6dS5OgPQVp_ivGws-tmWdZoU7m_LvC/view?usp=sharing}.}.

\subsection{Details of image-based environments}
\label{app:image_details}

\begin{figure}[ht]
    \centering
    \includegraphics[width=0.25\textwidth]{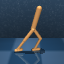}
    \includegraphics[width=0.25\textwidth]{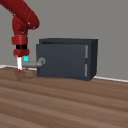}
    \vspace{-0.2cm}
    \caption{\footnotesize Our image-based environments: The observations are $64\times 64$ and $128\times 128$ raw RGB images for the \texttt{walker-walk} and \texttt{sawyer-door} tasks respectively. The \texttt{sawyer-door-close} environment used in in Section~\ref{sec:generalization_exps} also uses the \texttt{sawyer-door} environment.}
    \label{fig:visual}
\end{figure}

We visualize our image-based environments in Figure~\ref{fig:visual}. We use the standard \texttt{walker-walk} environment from \citet{tassa2018deepmind} with $64\times64$ pixel observations and an action repeat of 2. Datasets were constructed the same way as \citet{fu2020d4rl} with 200 trajectories each. For the \texttt{sawyer-door} we use $128\times128$ pixel observations. The medium-expert dataset contains 1000 rollouts (with a rollout length of 50 steps) covering the state distribution from grasping the door handle to opening the door. The expert dataset contains 1000 trajectories samples from a fully trained (stochastic) policy. The data was obtained from the training process of a stochastic SAC policy using dense reward function as defined in \citet{yu2020meta}. However, we relabel the rewards, so an agent receives a reward of 1 when the door is fully open and 0 otherwise. This aims to evaluate offline-RL performance in a sparse-reward setting. All the datasets are from \citep{Rafailov2020LOMPO}.

\subsection{Computation Complexity}

For the D4RL and generalization experiments, COMBO is trained on a single NVIDIA GeForce RTX 2080 Ti for one day. For the image-based experiments, we utilized a single NVIDIA GeForce RTX 2070. We trained the \texttt{walker-walk} tasks for a day and the \texttt{sawyer-door-open} tasks for about two days.

\subsection{License of datasets}

We acknowledge that all datasets used in this paper use the MIT license.

\begin{table}[ht]
\centering
\scriptsize
\begin{tabular}{l|r|r|r|r|}
\toprule 
\textbf{Environment} & \stackanchor{\textbf{Batch}}{\textbf{Mean}} & \stackanchor{\textbf{Batch}}{\textbf{Max}} & \stackanchor{\textbf{COMBO}}{\textbf{(Ours)}} & \textbf{CQL+MBPO}\\ \midrule
halfcheetah-jump & -1022.6 & 1808.6 & \textbf{5392.7}$\pm$575.5 & 4053.4$\pm$176.9\\
ant-angle & 866.7 & 2311.9 & \textbf{2764.8}$\pm$43.6 & 809.2$\pm$135.4\\
sawyer-door-close & 5\% & 100\% & \textbf{100}\%$\pm$0.0\% & 62.7\%$\pm$24.8\%\\
\bottomrule
\end{tabular}
\caption{
\footnotesize Comparison between COMBO and CQL+MBPO on tasks that require out-of-distribution generalization. Results are in average returns of \texttt{halfcheetah-jump} and \texttt{ant-angle} and average success rate of \texttt{sawyer-door-close}. All results are averaged over 6 random seeds, $\pm$ the $95\%$-confidence interval.
}
\vspace{-0.3cm}
\label{tbl:cql_mbpo}
\normalsize
\end{table}

\section{Comparison to the Naive Combination of CQL and MBPO}
\label{app:cql_mbpo}

\iclr{In this section, we stress the distinction between COMBO and a direct combination of two previous methods CQL and MBPO (denoted as CQL + MBPO). CQL+MBPO performs Q-value regularization using CQL while expanding the offline data with MBPO-style model rollouts. While COMBO utilizes Q-value regularization similar to CQL, the effect is very different. CQL only penalizes the Q-value on unseen actions on the states observed in the dataset whereas COMBO penalizes Q-values on states generated by the learned model while maximizing Q values on state-action tuples in the dataset. Additionally, COMBO also utilizes MBPO-style model rollouts for also augmenting samples for training Q-functions.

To empirically demonstrate the consequences of this distinction, CQL + MBPO performs quite a bit worse than COMBO on generalization experiments (Section~\ref{sec:generalization_exps}) as shown in Table~\ref{tbl:cql_mbpo}. The results are averaged across 6 random seeds ($\pm$ denotes 95\%-confidence interval of the various runs). This suggests that carefully considering the state distribution, as done in COMBO, is crucial.}

\end{document}